\documentclass[journal]{IEEEtran}
\usepackage[utf8]{inputenc}
\usepackage[noadjust]{cite}
\usepackage[margin=1in]{geometry} 
\usepackage{amsmath,amsthm,amssymb}
\usepackage{pdfpages}
\usepackage{balance}
\usepackage{bm}
\usepackage{graphicx}
\usepackage{yfonts}
\usepackage{xcolor}
\usepackage{booktabs}
\usepackage{siunitx}
\newcommand{\ra}[1]{\renewcommand{\arraystretch}{#1}}
\usepackage[caption=false]{subfig}

\usepackage{svg}
\newcommand{\orcid}[1]{\href{https://orcid.org/#1}{\includegraphics[width=10pt]{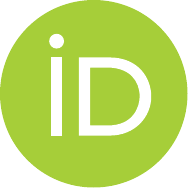}}}

\setlength{\textfloatsep}{0.2cm}
\setlength{\floatsep}{0.2cm}

\usepackage{hyperref}
\hypersetup{
    colorlinks=true,
    citecolor=blue,
    urlcolor=cyan,
}

\theoremstyle{definition}
\newtheorem{theorem}{Theorem}
\newtheorem{lemma}{Lemma}
\newtheorem{definition}{Definition}
\newtheorem{problem}{Problem}
\DeclareMathOperator{\tr}{tr}
\DeclareMathOperator{\sgn}{sgn}
\DeclareMathOperator{\diag}{diag}


\title{Decentralized Adaptive Control for Collaborative Manipulation of Rigid Bodies}
\author{Preston Culbertson\orcid{0000-0002-1403-8697},
Jean-Jacques Slotine\orcid{0000-0002-7161-7812},
and Mac Schwager\orcid{0000-0002-7871-3663}
\thanks{P. Culbertson is with the Department of Mechanical Engineering, Stanford University, \texttt{pculbertson@stanford.edu}.}
\thanks{J.-J. Slotine is with the Department of Mechanical Engineering and the Department of Brain and Cognitive Sciences, Massachusetts Institute of Technology, \texttt{jjs@mit.edu}.}
\thanks{M. Schwager is with the Department of Aeronautics and Astronautics, Stanford University, \texttt{schwager@stanford.edu}.}
\thanks{This work was supported in part by the NASA Space Technology Research Fellowship Grant 80NSSC18K1180.}
\thanks{This paper will appear in the IEEE Transactions on Robotics. ©~2021 IEEE.  Personal use of this material is permitted.  Permission from IEEE must be obtained for all other uses, in any current or future media, including reprinting/republishing this material for advertising or promotional purposes, creating new collective works, for resale or redistribution to servers or lists, or reuse of any copyrighted component of this work in other works.}
}

\begin{document}
\maketitle
\begin{abstract}
    In this work, we consider a group of robots working together to manipulate a rigid object to track a desired trajectory in $SE(3)$.  The robots do not know the mass or friction properties of the object, or where they are attached to the object. They can, however, access a common state measurement, either from one robot broadcasting its measurements to the team, or by all robots communicating and averaging their state measurements to estimate the state of their centroid. To solve this problem, we propose a decentralized adaptive control scheme wherein each agent maintains and adapts its own estimate of the object parameters in order to track a reference trajectory. We present an analysis of the controller's behavior, and show that all closed-loop signals remain bounded, and that the system trajectory will almost always (except for initial conditions on a set of measure zero) converge to the desired trajectory. We study the proposed controller’s performance using numerical simulations of a manipulation task in 3D, as well as hardware experiments which demonstrate our algorithm on a planar manipulation task. These studies, taken together, demonstrate the effectiveness of the proposed controller even in the presence of numerous unmodeled effects, such as discretization errors and complex frictional interactions.
\end{abstract}
\vspace{-1em}
\begin{IEEEkeywords}
Robust/adaptive control of robotic systems; cooperative manipulators; multi-robot systems; distributed robot systems.
\end{IEEEkeywords}
\vspace{-1em}

\section{Introduction}
As robots are fielded in applications ranging from disaster relief to autonomous construction, it is likely that such applications will, at times, require manipulation of objects which exceed a single robot's actuation capacities (e.g. debris removal, material transport) \cite{Khatib:1999aa}. Notably, when humans undertake such tasks, they are able to work together to flexibly and reliably transport objects which are too massive for one person alone. We are interested in enabling similar capabilities in robot teams.

In this work, we consider the problem of cooperative manipulation, where a group of robots works together to manipulate a common payload. This problem has long been of significant interest to both the multi-robot systems community, and the broader robotics community \cite{Khatib:1996ab}, since it requires close collaboration between robots in addition to the numerous sensing and actuation challenges involved in any manipulation task.

While collaborative manipulation has been studied somewhat extensively in literature \cite{Khatib:1996aa,Fink:2008aa,Rus:1995aa}, manipulation teams are still quite limited in their functionality, often due to restrictive assumptions. Many methods require exact payload knowledge (i.e., the payload's mass, inertial properties, and the location of its center of mass), which imposes the requirement that every payload be thoroughly calibrated prior to manipulation, in addition to making manipulation teams brittle in the face of parameter uncertainty. Further, many algorithms require a centralized planner, which introduces both latency and a single point of failure to the manipulation system. Recent works have proposed handling payload uncertainty by explicitly estimating physical parameters before manipulation \cite{ Corah:2017aa, Franchi:2019aa}, or via an indirect scheme which combines a consensus-based estimator with a certainty-equivalent controller \cite{Lee:2017aa}. We consider a different approach, decentralized adaptive control, where each agent uses a single adaptive controller to perform estimation and control simultaneously.

In this work, we take a minimalist approach to collaborative manipulation, and present a control scheme which requires no prior payload characterization and can be computed locally on board each agent. To achieve this, we use a controller on $SE(3)$, together with a decentralized adaptation law, which allows the manipulation team to adapt to unknown payload dynamics; we also prove our controller's convergence to tracking a reference trajectory using Barbalat's lemma. We only require that the agents share a common state measurement, which can be achieved by having one agent broadcast its measurements to the entire manipulation team, or instead by having agents communicate and average their state measurements to estimate the state of their centroid. This estimate could also be obtained by running a consensus algorithm \cite{Olfati-Saber:2004aa} online, using a mesh network between agents.

Intuitively, during manipulation each agent executes the proposed controller, with its individual set of parameter estimates (or, equivalently, control gains). The agents then use the common measurement to adapt their local parameters, such that the measurement point asymptotically tracks a desired trajectory. Such an approach leads to a manipulation algorithm which can easily scale in the number of agents, and is flexible enough to manipulate a large variety of payloads, while tolerating both parameter uncertainty and single-agent failure.

This paper proposes control and adaptation laws for manipulation tasks in both the plane and three-dimensional space, in addition to a proof of convergence to the desired trajectory, as well as the boundedness of all closed-loop signals. We further validate the controller's performance both numerically, simulating a manipulation task in 3D, and experimentally, using a team of ground robots performing a planar manipulation task. 

\section{Prior Work}
Collaborative manipulation has been a subject of both sustained and broad interest in multi-robot systems. Early work such as \cite{Alford:1984aa} first proposed multi-arm manipulation strategies which used a central computer to control each arm. This early work was followed with a flurry of interest, including \cite{Khatib:1988aa,Khatib:1996aa}, where the authors study the dynamics and force allocation of redundant manipulators, and \cite{Wen:1992aa}, which investigates various control strategies for multiple arms. A number of authors \cite{Bonitz:1996aa, Caccavale:2008aa, Erhart:2013aa} have used impedance controllers for collaborative manipulation, in order to yield compliant object behavior, as well as to avoid large internal forces on the payload. An approach for characterizing such forces is proposed in \cite{Erhart:2015aa}.

While early work focused almost exclusively on multi-arm manipulation,  often with centralized architectures and only a few robots, other authors have focused instead on collaborative manipulation with large teams of mobile robots. In \cite{Rus:1995aa} and \cite{Bohringer:1997aa}, the authors move furniture with small mobile robots, with a focus on reducing the communication and sensing requirements of manipulation algorithms.  Other works have studied various decentralized manipulation strategies, including potential fields \cite{Song:2002aa}, caging \cite{Fink:2008aa}, and distributed control allocation via matrix pseduo-inverse \cite{Faal:2016aa}. In \cite{Mellinger:2013aa}, the authors consider the problem of collaborative lifting using quadrotors which rigidly grasp a common payload; they use a least-squares solution to allocate control efforts between agents.

While most work, including this paper, assumes the robots are rigidly attached to the payload, some authors have studied the problem under different grasping assumptions. \cite{Lynch:1999aa} studies the problem of collaborative manipulation with thrusters (i.e., agents can only apply force to the body along a fixed line), and investigates the minimum number of thrusters needed to achieve nonlinear controllability of the system. Other authors have studied aerial towing (manipulation via cables) in works such as \cite{Sreenath:2013aa, Tang:2020aa, Michael:2011aa}. The authors in \cite{Jackson:2020aa} use a distributed optimization algorithm to perform collaborative manipulation of a cable-suspended load. 


While collaborative manipulation has been studied in a variety of settings, many works rely on restrictive assumptions, such as central planning or prior knowledge of the object parameters, which limit the practical applicability of their approaches. More recent work has focused on removing these assumptions. In \cite{Wang:2016aa}, the authors propose a controller which allows a group of agents to reach a force consensus without communication, using only the payload dynamics; a leader can steer this consensus to have the object track a desired trajectory. Further, \cite{Wang:2018aa} proposes a communication-free controller for collaborative manipulation with aerial robots, using online optimization to allocate control effort. \cite{Corah:2017aa} proposes a centralized estimation scheme, wherein the robots use an active perception strategy to minimize the uncertainty of their estimates of various payload parameters. In \cite{Franchi:2019aa}, \cite{Petitti:2016aa} the authors propose to first characterize the payload via an explicit distributed estimation scheme, in which robots use a communication network and local measurements to estimate both the mass and geometric properties of their common payload, before using these estimates to perform manipulation.

Previous work has proposed applying adaptive control to collaborative manipulation. The authors of \cite{Hu:1995aa} propose a centralized adaptive control scheme, which they demonstrate experimentally with a pair of robot arms. In \cite{Li:2007aa}, the authors propose a centralized robust adaptive controller for a group of mobile manipulators performing a collaborative manipulation task. 


Perhaps closest to this work are \cite{Liu:1998aa, Verginis:2017aa}, which also propose distributed adaptive control schemes for collaborative manipulation. This work differs from previous literature in adapting both inertial \textit{and} geometric parameters of the body; previous methods require exact measurement of manipulators' relative positions from the center of mass in order to invert the grasp matrix \cite{Prattichizzo:2008aa}, which relates wrenches applied at contact points to wrenches applied at the center of mass. We argue that this assumption is quite restrictive, since, in practice, this amounts to knowing exactly how the body's translational and rotational dynamics are coupled. Accurately localizing large numbers of manipulators (especially with respect to the center of mass) can be both time consuming and difficult; it is also unclear why the body's normalized first moment of mass (i.e., the center of mass) is assumed known, while its zeroth (mass) and second moments (inertia tensor) are assumed unknown. In contrast, our method can treat a much broader class of model parameters (including inertial, geometric, and other effects such as gravity and friction) as unknowns, and handle them through online adaptation. We believe that this provides greater robustness to uncertainty in these parameters, and eliminates the need to estimate these quantities before manipulation.

An early version of this work appeared in \cite{Culbertson:2018aa}; this work differs greatly from the original version. The most fundamental difference is that this work builds on the Slotine-Li adaptive controller \cite{Slotine:1987aa}, which allows for much more efficient parameterization than the general model reference adaptive controller in the original version. We do not constrain the measurement point to lie at the center of mass, and further introduce a composite error that allows for trajectory tracking instead of simple velocity control. Finally, we present new simulation and experimental results studying the novel controller.

Our basic algorithm can also be compared to quorum sensing \cite{Tabareau:2010aa,Russo:2010aa}, a natural phenomenon in which organisms or cells can exchange information~\cite{Waters:2005aa} and synchronize themselves using a noisy, shared environment, instead of using direct information exchange. Such mechanisms have been shown to reduce the influence of random noise on synchronization phenomena \cite{Tabareau:2010aa}, allowing for meaningful coordination without centralized control or explicit communication. The proposed algorithm can be seen as analogous to quorum sensing, wherein the common object acts as the shared environment, which allows the agents to coordinate their control actions to achieve tracking.

\section{Problem Statement}
We consider a rigid body which undergoes general motion (i.e., translation and rotation), during collaborative manipulation by a team of $N$ autonomous agents. There exists a world-fixed (Newtonian) frame $\mathcal{F}$, as well as a body-fixed frame $\mathcal{B}$, which is centered at $\mathbf{b}_{cm}$, the body's center of mass. There also exists a body-fixed measurement point $P$ which is located at $\mathbf{r}_p$ in $\mathcal{B}$. The body has mass $m$ and body-frame inertia of $\mathbf{I}_{cm}$ about its center of mass. Each agent is rigidly attached to the body at $\mathbf{r}_i$ from $P$, and applies a wrench $\bm{\tau}_i$, i.e., a combined force and torque. Figure \ref{fig:schem} shows a schematic of the system.

\begin{figure}[ht!]
    \centering
    \includegraphics[width=0.4\textwidth]{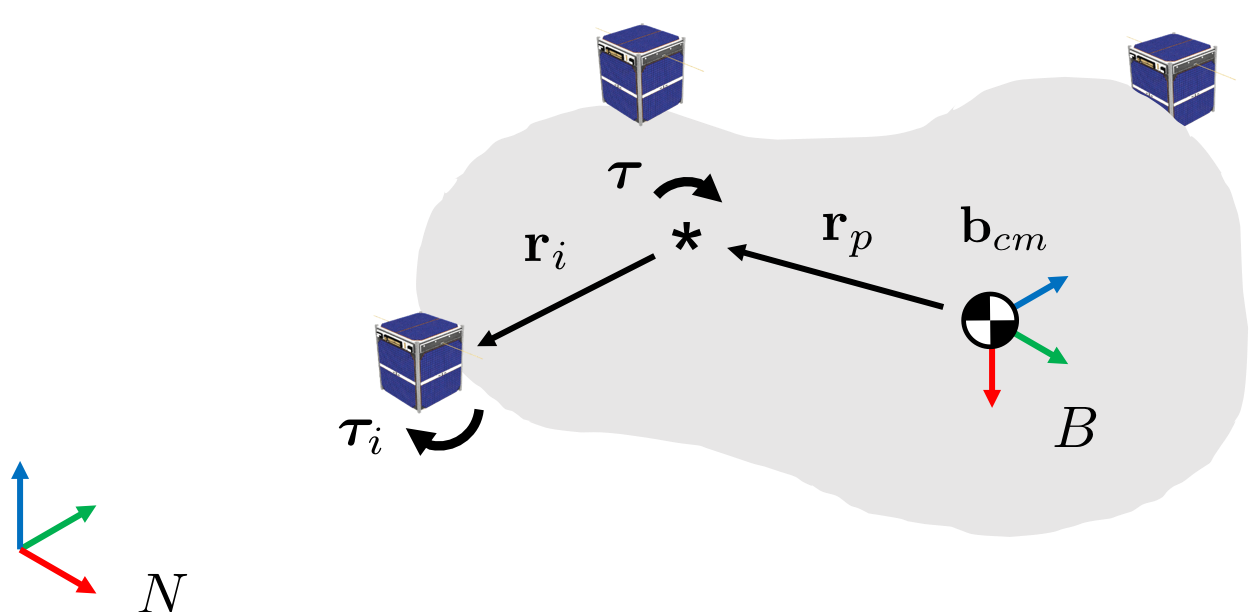}
    \caption{Schematic of the collaborative manipulation task.}
    \label{fig:schem}
\end{figure}

We describe the body's pose in $\mathcal{F}$ using a configuration variable $\mathbf{q} = (\mathbf{x}, \mathbf{R}) \in \mathbb{R}^n\times SO(n)$, where $$SO(n) = \left\{\mathbf{R} \in \mathbb{R}^{n \times n} \mid \mathbf{R}^T \mathbf{R} = \mathbf{I}, \det(\mathbf{R}) = 1 \right\},$$ is the group of rotation matrices, $\mathbf{x}$ is the position of the reference point $P$ in $\mathcal{F}$, $\mathbf{R}$ is the body-to-world rotation matrix relating $\mathcal{B}$ and $\mathcal{F}$, and $n \in \left\{2, 3\right\}$, depending on if the manipulation task is in the plane or in general 3D space, respectively. For simplicity, we say $$\mathbb{R}^n \times SO(n) \cong SE(n),$$ i.e., we denote this set as the Special Euclidean group, $$SE(n) = \left\{\left[\begin{array}{cc} \mathbf{R} & \mathbf{v}\\ \bm{0} & 1\end{array}\right] \mid \mathbf{R} \in SO(n), \mathbf{v} \in \mathbb{R}^n \right\}$$which is the set of $(n+1) \times (n+1)$ transformation matrices, since these spaces are homeomorphic. Abusing notation, we define $\dot{\mathbf{q}} = \left[\dot{\mathbf{x}}^T, \bm{\omega}^T\right]^T \in \mathbb{R}^{3n-3}$ which appends $\dot{\mathbf{x}}$, the velocity of $P$ in $\mathcal{F}$, with $\bm{\omega}$, the body's angular rate(s) in $\mathcal{F}$. We now define the control problem considered in this work.

 \begin{problem}
Assume each agent can measure the body's pose $\mathbf{q}$ and rates $\dot{\mathbf{q}}$, but has no knowledge of its mass properties ($m$, $\mathbf{I}_{cm}$) or geometry ($\mathbf{r}_p$, $\mathbf{r}_i$). Design a decentralized control law for each agent which allows the body's pose $\mathbf{q}$ to track a desired trajectory $\mathbf{q}_d(t) \in SE(n)$ asymptotically.
\label{pb:control}
\end{problem}

It is apparent that solving Problem \ref{pb:control} achieves collaborative manipulation under relatively few assumptions on the information available to each agent. However, the assumption of a common measurement $(\mathbf{q},\dot{\mathbf{q}})$ warrants additional motivation.

We aim to collaboratively control both the linear and rotational dynamics of the object; but in defining a reference trajectory $\mathbf{q}_d(t) \in SE(n)$, we must additionally define which point on the body we wish to track this trajectory. Yet, since we assume the body's geometry is unknown, this problem is circular: even if the agents can measure their own states, they have no way of computing or estimating the state of the measurement point, making the problem ill-posed.


In this work, we chose to resolve this ambiguity by allowing the agents to simply measure the pose and velocity of the reference point. In practice, this could be achieved by one agent broadcasting its measurements to the group; this would require a very simple ``broadcast'' network architecture that could easily scale by adding more robots equipped to receive the measurement. Alternatively, we note that the average of the robots' individual linear and angular velocities is equal to the body's linear and angular rates at the robots' centroid. Thus, the robots could use either an `all-to-all'' communication scheme, or a mesh network and linear consensus \cite{Olfati-Saber:2004aa} to compute the averaged measurement.

However, different assumptions can be made if we still wish to solve Problem \ref{pb:control}, but without the use of a common measurement. One option is to restrict the class of desired trajectories $\mathbf{q}_d$; if the body's angular velocity $\bm{\omega} = \bm{0}$, then the velocity is uniform across all points on the body, and robots can use their own velocity measurements $\dot{\mathbf{x}}_i$ in order to track a desired linear velocity $\dot{\mathbf{x}}_d$. Similarly, if we only seek to control the body's orientation $\mathbf{R}$, if the agents share a reference frame, then their pose measurements are identical, making the problem again well-posed. 

Finally, we can eliminate the need for a common measurement if we assume the agents know their position $\mathbf{r}_i$ with respect to the measurement point, as in \cite{Liu:1998aa, Verginis:2017aa}. Using this information, the agents can easily compute the position and velocity of the reference point from their own measurements. However, we argue such an assumption is quite restrictive, since each agent must now have some way of localizing itself on the body before starting a manipulation task. We further note that such a method is not robust to errors in measuring $\mathbf{r}_i$, which could introduce unmodeled sources of uncertainty.

To solve Problem \ref{pb:control}, we propose a modified form of the Slotine-Li adaptive manipulator controller \cite{Slotine:1987aa}, which is generalized to the multi-agent case, and can adapt to the unknown mass and geometric parameters. We achieve tracking of a desired pose trajectory $\mathbf{q}_d(t) \in SE(3)$ by introducing a composite error on $SE(3)$, and show that the proposed controller indeed results in asymptotic tracking, while also ensuring all signals in the system remain bounded.
\section{Preliminaries}
\subsection{$SO(3)$ and Other Mathematical Preliminaries}
In this work, we are interested in controlling both the position and orientation of a rigid body, most generally in 3D space. While there exist numerous ways to parameterize the orientation of a rigid body, it is well-known that three-parameter representations such as Euler angles are singular, meaning they experience configurations in which the body's orientation and rotational dynamics are not uniquely defined. In this work we instead represent orientation using a full rotation matrix $\mathbf{R} \in SO(3)$, which is both unique and non-singular. 

Since $SO(3)$ is a Lie group, it admits a Lie algebra which captures the tangent space of the group at its identity element; this Lie algebra is in fact the set of skew-symmetric matrices $\mathfrak{so}(3)$, where $$\mathfrak{so}(3) = \left\{ \mathbf{A} \in \mathbb{R}^{3 \times 3} \mid \mathbf{A}^T = -\mathbf{A} \right\}.$$

We further note that $\mathfrak{so}(3)$ forms a three-dimensional vector space: any matrix in $\mathfrak{so}(3)$ can be mapped to a vector in $\mathbb{R}^3$ and vice versa. Let us denote by $(\cdot)^\times$ the map from $\mathbb{R}^3$ to $\mathfrak{so}(3),$ i.e., for $\mathbf{v} = \left[v_1, v_2, v_3\right]^T \in \mathbb{R}^3,$
\begin{equation*}
    \mathbf{v}^\times = \left[\begin{array}{ccc} 0 &-v_3 &v_2\\
    v_3 &0 &-v_1\\
    -v_2 &v_1 &0 \end{array}\right].
\end{equation*}
We further denote the inverse map as $(\cdot)^\vee$.

Let us define the operator $\mathbb{P}_a : \mathbb{R}^{n \times n} \mapsto \mathbb{R}^{n \times n},$ where, for  $\mathbf{A} \in \mathbb{R}^{n\times  n},$ $$\mathbb{P}_a(\mathbf{A}) = \textstyle \frac{1}{2} \left(\mathbf{A} - \mathbf{A}^T\right),$$ which we call the ``skew symmetric'' part of $\mathbf{A}$. Further, let us define $\mathbb{P}_s : \mathbb{R}^{n\times n} \mapsto \mathbb{R}^{n \times n},$ where $$\mathbb{P}_s(\mathbf{A}) = \textstyle \frac{1}{2} \left(\mathbf{A} + \mathbf{A}^T \right),$$ is the ``symmetric'' part. It is easy to verify that $\mathbf{A} = \mathbb{P}_a(\mathbf{A}) + \mathbb{P}_s(\mathbf{A})$, and for $n = 3,$ $\mathbb{P}_a(\mathbf{A}) \in \mathfrak{so}(3)$.

We further define the usual inner product on $\mathbb{R}^{n \times n}$, $$\left\langle \mathbf{A}, \mathbf{B} \right\rangle = \tr(\mathbf{A}^T \mathbf{B}).$$
Beyond the usual properties of an inner product, it is easy to verify that $\left\langle \mathbb{P}_a(\mathbf{A}), \mathbb{P}_s(\mathbf{A})\right\rangle = 0,$ and $\left\langle \mathbb{P}_a (\mathbf{A}), \mathbb{P}_a(\mathbf{A})\right\rangle = \frac{1}{2}\lvert\lvert \mathbb{P}_a(\mathbf{A})^\vee \rvert \rvert^2.$

Finally, we say a symmetric matrix $\mathbf{A} = \mathbf{A}^T \in \mathbb{R}^{n\times n}$ is positive semidefinite if, for all $\mathbf{x} \in \mathbb{R}^{n},$ $\mathbf{x}^T \mathbf{A} \mathbf{x} \geq 0.$ We say $\mathbf{A}$ is positive definite if the inequality holds strictly for all nonzero $\mathbf{x}$.

\subsection{Convergence of Nonlinear Systems}

Due its topology, no dynamical system on $SE(3)$ has a globally asymptotically stable equilibrium \cite{Bhat:1998aa}. Thus, we turn to relaxed notions of stability and convergence for systems with rotational dynamics. Here we state a slightly relaxed notion of convergence, \textit{almost global  convergence}.

\begin{definition}[Almost Global Convergence]
Let $\mathbf{x}_e$ be an equilibrium of a dynamical system $\dot{\mathbf{x}} = \mathbf{f}(\mathbf{x}),$ where $\mathbf{x} \in \mathcal{X}$. We say trajectories of the system almost globally converge to $\mathbf{x}_e$ if, for some open, dense subset of the state space $\mathcal{A} \subset \mathcal{X}$, with closure $\bar{\mathcal{A}} = \mathcal{X}$, all trajectories with $\mathbf{x}(0) \in \mathcal{A}$ tend towards $\mathbf{x}_e$ asymptotically.

Similarly, we say trajectories of the system almost globally exponentially converge to $\mathbf{x}_e$ if all trajectories with $\mathbf{x}(0) \in \mathcal{A}$ converge exponentially to $\mathbf{x}_e$.
\end{definition}

\section{Equations of Motion for Collaborative~Manipulation}
In this section, we will formulate the equations of motion for collaborative manipulation tasks, both in the plane and in three dimensions, and show they can be represented using the Lagrangian equations for robot motion \cite{Murray:2017aa}, commonly termed ``the manipulator equations.''  
\subsection{Rigid Body Dynamics in Three Dimensions}
Let us consider the dynamics of a rigid body being manipulated as shown in Figure \ref{fig:schem}.

The body's linear motion is given by Newton's law,
\begin{equation}
    \mathbf{F} = m\mathbf{a}_{cm},
    \label{eq:newton}
\end{equation}
where $\mathbf{F}$ is the total force applied to the body, and $\mathbf{a}_{cm}$ is the acceleration of the body's center of mass in $\mathcal{F}$. We seek to relate $\mathbf{a}_{cm}$ to $\mathbf{a}_{p}$, the acceleration of $P$ in $\mathcal{F}$. We first relate the points' velocities,
\begin{equation*}
    \mathbf{v}_p = \mathbf{v}_{cm} + \bm{\omega} \times \mathbf{R} \mathbf{r}_p,
\end{equation*}
where $\bm{\omega}$ is the angular velocity of $\mathcal{B}$ in $\mathcal{F}$, and $\mathbf{R}$ is the rotation matrix from $\mathcal{B}$ to $\mathcal{F}$. This can be differentiated to yield
\begin{equation}
    \mathbf{a}_p = \mathbf{a}_{cm} + \bm{\alpha} \times \mathbf{R} \mathbf{r}_p + \bm{\omega} \times (\bm{\omega} \times \mathbf{R} \mathbf{r}_p),
    \label{eq:accel}
\end{equation}
where $\bm{\alpha}$ is $\mathcal{B}$'s angular acceleration in $\mathcal{F}$.
We can substitute \eqref{eq:accel} into \eqref{eq:newton} to yield
\begin{equation}
    \mathbf{F} = m\mathbf{a}_p - m(\bm{\alpha} \times \mathbf{R}\mathbf{r}_p) - m \bm{\omega} \times (\bm{\omega} \times \mathbf{R} \mathbf{r}_p).
    \label{eq:linear}
\end{equation}

The body's rotational dynamics are given by Euler's rigid body equation,
\begin{equation}
    \mathbf{M}_p = \mathbf{R} \mathbf{J}_p \mathbf{R}^T \bm{\alpha} + \bm{\omega} \times (\mathbf{R}\mathbf{J}_p\mathbf{R}^T \bm{\omega}) - m \mathbf{R} \mathbf{r}_p \times \mathbf{a}_{p},
    \label{eq:rotation}
\end{equation}
where $\mathbf{J}_p$ is the body's inertia matrix about $\mathbf{b}_p$, which is given by
\begin{equation*}
    \mathbf{J}_p = \mathbf{J}_{cm} + m\left((\mathbf{r}_p^T \mathbf{r}_p) \mathbf{I} - \mathbf{r}_p \mathbf{r}_p^T\right),
\end{equation*}
where $\mathbf{I}$ denotes the identity matrix. 

We again define the configuration variable $\mathbf{q} = \left[\mathbf{x}, \mathbf{R}\right] \in \mathbb{R}^3 \times SO(3),$ where $\mathbf{x}$ is the position of the measurement point in $\mathcal{F}$. Abusing notation, we can let $\dot{\mathbf{q}}, \ddot{\mathbf{q}} \in \mathbb{R}^6$, where $\dot{\mathbf{q}} = \left[\dot{\mathbf{x}}^T,\bm{\omega}^T\right]^T $ represents the body's linear and angular rates, and $\ddot{\mathbf{q}} = \left[\ddot{\mathbf{x}}^T, \bm{\alpha}^T\right]^T$ represents its linear and angular accelerations. 

Using these configuration variables, we can now write \eqref{eq:accel} and \eqref{eq:rotation} more compactly as 
\begin{equation}
    \bm{\tau} = \mathbf{H}(\mathbf{q})\ddot{\mathbf{q}} + \mathbf{C}(\mathbf{q},\dot{\mathbf{q}})\dot{\mathbf{q}},
    \label{eq:manip_eq}
\end{equation}
where $\bm{\tau}$ is the total wrench applied to the body about the measurement point, where
\begin{equation}
     \mathbf{H}(\mathbf{q}) = \left[\begin{array}{cc} m\mathbf{I} & m(\mathbf{R} \mathbf{r}_p)^\times \\ -m(\mathbf{R} \mathbf{r}_p)^\times & \mathbf{R} \mathbf{J}_p \mathbf{R}^T \end{array} \right]
    \label{eq:H_SO(3)}
\end{equation}
is the system's inertia matrix (which is symmetric, positive definite), and $$
\resizebox{\columnwidth}{!}{%
    $
        \mathbf{C}(\mathbf{q},\dot{\mathbf{q}}) = \left[\begin{array}{cc} \bm{0}_3  & m\bm{\omega}^\times (\mathbf{R} \mathbf{r}_p)^\times \\ -m\bm{\omega}^\times (\mathbf{R} \mathbf{r}_p)^\times & \bm{\omega}^\times \mathbf{R}\mathbf{I}_p\mathbf{R}^T -m((\mathbf{R}\mathbf{r}_p)^\times \dot{\mathbf{x}})^\times \end{array} \right]
     $%
    }$$
is a matrix which contains centrifugal and Coriolis terms, with the $3\times3$ zero matrix denoted by $\bm{0}_3$. 

Further, we can show that these matrices have a number of interesting properties. Specifically, $\mathbf{H}(\mathbf{q})$ is positive definite for all $\mathbf{q}$, and $\dot{\mathbf{H}} - 2\mathbf{C}$ is skew-symmetric. These properties can be viewed as matrix versions of common properties of Hamiltonian systems (e.g. conservation of energy). Proof of these properties is included in Appendix \ref{sec:matrix_properties}. 

One can easily derive similar expressions for the two-dimensional case, when the body is restricted to move in the plane; we omit these expressions for brevity.

\subsection{Grasp Matrix and its Properties}
While the dynamics in \eqref{eq:manip_eq} describe the body's motion under the total wrench $\bm{\tau}$ about $\mathbf{b}_p,$ we must further express $\bm{\tau}$ as a function of each robot's applied wrench $\bm{\tau}_i$. We can write $\bm{\tau}$ as 
\begin{equation}
    \bm{\tau} = \sum_{i=1}^{N} \mathbf{M}(\mathbf{q},
    \mathbf{r}_i) \bm{\tau}_i,
\end{equation}
where $\mathbf{M}$ denotes the grasp matrix, which is given by
\begin{equation*}
\mathbf{M}(\mathbf{q},\mathbf{r}_i) = \left[\begin{array}{cc} \mathbf{I} & \bm{0}_3 \cr 
(\mathbf{R} \mathbf{r}_i)^\times & \mathbf{I} \end{array} \right],
\end{equation*}
where agent $i$ grasps the object at some point $\mathbf{r}_i$ from the measurement point. We denote the set of such grasp matrices as $\mathcal{M}$.

The grasp matrix has some unique properties, specifically,
\begin{equation*}
    \mathbf{M}(\mathbf{r}_i)^{-1} = \left[\begin{array}{cc} \mathbf{I} & \bm{0}_3 \\ 
-(\mathbf{R} \mathbf{r}_i)^\times & \mathbf{I} \end{array} \right] = \mathbf{M}(-\mathbf{r}_i),
\end{equation*}
and 
\begin{align}
    \mathbf{M}(\mathbf{r}_i) \mathbf{M}(\mathbf{r}_j) &= \left[\begin{array}{cc} \mathbf{I} & \bm{0}_3 \cr 
\left(\mathbf{R}(\mathbf{r}_i+\mathbf{r}_j)\right)^\times & \mathbf{I} \end{array} \right] \nonumber\\ &= \mathbf{M}(\mathbf{r}_i) + \mathbf{M}(\mathbf{r}_j) - \mathbf{I}.  \label{eq:grasp_mats}
\end{align}
Thus, interestingly, for two matrices in $\mathcal{M}$, their product is simply equal to their sum, minus the identity.

\section{Decentralized Adaptive Trajectory Tracking}
We will now propose a decentralized adaptive controller suitable for accomplishing the collaborative manipulation tasks outlined previously, and provide proofs of boundedness of signals, and almost global asymptotic tracking.

\subsection{Proposed Composite Error on $SE(3) \times \mathbb{R}^6$}
We begin by proposing a composite error for 3D poses and their derivatives, which lie in $SE(3)\times  \mathbb{R}^6$. Adaptive controllers for manipulators in literature (cf., \cite[Chapter~9]{Slotine:1991aa}) typically use a composite error $\mathbf{s}$ which linearly combines position and velocity errors, and exhibits linear, exponentially stable dynamics on the surface $\mathbf{s} = 0.$ Such a variable is appropriate for configuration variables (such as positions and small joint angles) which lie in Cartesian space.

However, since we aim to track both a desired position $\mathbf{x}_d$ and orientation $\mathbf{R}_d$, the system's dynamics do not evolve naturally in Cartesian space. We thus propose a composite error $\mathbf{s}$ which is non-singular, inherits the topology of $SE(3)$, and exhibits almost global convergence when on the surface $\mathcal{D} = \left\{(\mathbf{q}_e,\dot{\mathbf{q}}_e) \in SE(3) \times \mathbb{R}^6 \mid \mathbf{s}(\mathbf{q}_e, \dot{\mathbf{q}}_e) = 0\right\}$. To achieve this, we adapt the sliding variable on $SO(3)$ from \cite{Gomez-Cortes:2019aa}, and combine it with a typical linear term on $\mathbb{R}^3$ to define a composite error on $SE(3)$. We then show almost all trajectories of the system converge to the desired trajectory when $\mathbf{s} = 0.$


We aim to control the body to track a desired position $\mathbf{x}_d(t)$, and a desired orientation, $\mathbf{R}_d$, which satsifies $$\dot{\mathbf{R}}_d = \bm{\omega}_d^\times \mathbf{R}_d,$$ for some desired angular velocity $\bm{\omega}_d$. Let us define the rotational and angular velocity errors as $\mathbf{R}_e = \mathbf{R}_d^T \mathbf{R},$ and $\bm{\omega}_e = \bm{\omega}-\bm{\omega}_d,$ noting $$(\mathbf{R}_e, \bm{\omega}_e) = (\mathbf{I}, \bm{0}) \iff (\mathbf{R}, \bm{\omega}) = (\mathbf{R}_d, \bm{\omega}_d).$$

We can then obtain the rotation error dynamics,
\begin{align}
    \dot{\mathbf{R}}_e &= \dot{\mathbf{R}}_d^T \mathbf{R} + \mathbf{R}_d^T \dot{\mathbf{R}} = \mathbf{R}_d^T(\bm{\omega}-\bm{\omega}_d)^\times \mathbf{R}, \nonumber \\
    &= \left(\mathbf{R}_d^T(\bm{\omega}-\bm{\omega}_d)\right)^\times \mathbf{R}_d^T\mathbf{R} = \left(\mathbf{R}_d^T \bm{\omega}_e\right)^\times \mathbf{R}_e. {\label{eq:rot_err_dynamics}}
\end{align}

Finally, we propose a composite error $\mathbf{s} : SE(3) \times \mathbb{R}^6 \mapsto \mathbb{R}^6,$ which is given by 
\begin{equation*}
    \mathbf{s} = \left[\begin{array}{c} \mathbf{s}_\ell \\ \bm{\sigma} \end{array}\right] = \left[\begin{array}{c} \dot{\tilde{\mathbf{x}}} + \lambda \tilde{\mathbf{x}}\\ \bm{\omega}_e + \lambda \mathbf{R}_d \mathbb{P}_a(\mathbf{R}_e)^\vee \end{array}\right],
\end{equation*}
where $\widetilde{\mathbf{x}} = \mathbf{x} - \mathbf{x}_d$ is the linear tracking error, and $\mathbf{s}_\ell : \mathbb{R}^3 \times \mathbb{R}^3 \mapsto \mathbb{R}^3$ and $\bm{\sigma} : SO(3) \times \mathbb{R}^3 \mapsto \mathbb{R}^3$ are composite errors for the object's linear and angular dynamics, respectively.

\subsection{System Behavior on $\mathbf{s} = 0$}
\label{sec:composite_proof}
We now consider the system's behavior when constrained to the surface $\mathbf{s} = 0$. The behavior of the linear dynamics is obvious; imposing the condition $\mathbf{s}_\ell = 0$ results in the linear error dynamics $$\dot{\tilde{\mathbf{x}}} = -\lambda \tilde{\mathbf{x}},$$ which defines a stable linear system. Thus, if the system lies on the surface, the position error $\tilde{\mathbf{x}}$ tends exponentially to zero, with time constant $\frac{1}{\lambda}.$

Thus, we turn our attention to the rotational error dynamics defined by $\bm{\sigma} = 0$. Imposing this condition results in the rotational dynamics
\begin{align*}
    \dot{\mathbf{R}}_e &= \left(-\lambda \mathbf{R}_d^T \mathbf{R}_d \mathbb{P}_a(\mathbf{R}_e)^\vee\right)^\times \mathbf{R}_e,\\
    &= -\lambda \mathbb{P}_a(\mathbf{R}_e)\mathbf{R}_e,
\end{align*}
using the rotation error dynamics given by \eqref{eq:rot_err_dynamics}. We note that $(\mathbf{R}_e, \bm{\omega}_e) = (\mathbf{I},\bm{0})$ both lies on the surface $\mathbf{s} = 0$, and is an equilibrium of the reduced-order rotational dynamics.

\begin{theorem}
Almost all trajectories $(\mathbf{R}_e, \bm{\omega}_e)$ of the reduced-order system given by $\bm{\sigma} = 0$ converge exponentially to the desired equilibrium $(\mathbf{I}, \bm{0}).$
\end{theorem}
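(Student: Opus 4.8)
The plan is to construct a Lyapunov-like function on $SO(3)$ measuring the rotation error, show it decreases along the reduced-order flow $\dot{\mathbf{R}}_e = -\lambda\,\mathbb{P}_a(\mathbf{R}_e)\mathbf{R}_e$, identify the equilibrium set, and then upgrade "convergence for almost all initial conditions" to "exponential convergence" by a linearization argument near $(\mathbf{I},\bm{0})$. A natural candidate is $V(\mathbf{R}_e) = \tr(\mathbf{I} - \mathbf{R}_e)$, which is nonnegative, vanishes exactly at $\mathbf{R}_e = \mathbf{I}$, and is smooth on $SO(3)$. First I would compute $\dot{V} = -\tr(\dot{\mathbf{R}}_e) = \lambda\,\tr\!\big(\mathbb{P}_a(\mathbf{R}_e)\mathbf{R}_e\big)$. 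Writing $\mathbf{R}_e = \mathbb{P}_s(\mathbf{R}_e) + \mathbb{P}_a(\mathbf{R}_e)$ and using $\tr(\mathbb{P}_a(\mathbf{R}_e)\,\mathbb{P}_s(\mathbf{R}_e)) = \langle \mathbb{P}_a(\mathbf{R}_e), \mathbb{P}_s(\mathbf{R}_e)\rangle = 0$ (recorded in the preliminaries, modulo a transpose that costs only a sign on the skew part), this reduces to $\dot V = \lambda\,\tr\big(\mathbb{P}_a(\mathbf{R}_e)^2\big) = -\lambda\,\lVert \mathbb{P}_a(\mathbf{R}_e)^\vee\rVert^2 \le 0$, again via the identity $\langle \mathbb{P}_a(\mathbf{R}_e),\mathbb{P}_a(\mathbf{R}_e)\rangle = \tfrac12\lVert\mathbb{P}_a(\mathbf{R}_e)^\vee\rVert^2$. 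So $V$ is nonincreasing.

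Next I would characterize the invariant set where $\dot V = 0$, i.e. $\mathbb{P}_a(\mathbf{R}_e) = \bm 0$, meaning $\mathbf{R}_e$ is symmetric; a symmetric rotation matrix has eigenvalues in $\{+1,-1\}$ with the product of eigenvalues equal to $1$, so either $\mathbf{R}_e = \mathbf{I}$ or $\mathbf{R}_e$ is a rotation by $\pi$ about some axis ($\tr \mathbf{R}_e = -1$). On this set $\dot{\mathbf{R}}_e = \bm 0$, so each such configuration is itself an equilibrium. By LaSalle's invariance principle every trajectory converges to this equilibrium set. The $\pi$-rotation equilibria form a set diffeomorphic to $\mathbb{RP}^2$, which has measure zero in $SO(3)$; linearizing the flow about any such point shows it has an unstable direction (the standard topological obstruction — no smooth flow on $SO(3)$ has a globally attracting point), so its stable manifold is a proper submanifold, hence measure zero. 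Removing the union of these stable manifolds leaves an open dense set whose closure is all of $SO(3)$, and from it trajectories converge to $\mathbf{R}_e = \mathbf{I}$; together with the decoupled (unconstrained) relation $\bm{\omega}_e = -\lambda \mathbf{R}_d\,\mathbb{P}_a(\mathbf{R}_e)^\vee \to \bm 0$, this gives almost-global convergence to $(\mathbf{I},\bm 0)$.

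To get the \emph{exponential} rate I would linearize near $\mathbf{R}_e = \mathbf{I}$: writing $\mathbf{R}_e = \exp(\bm\theta^\times) \approx \mathbf{I} + \bm\theta^\times$ for small $\bm\theta\in\mathbb{R}^3$, we have $\mathbb{P}_a(\mathbf{R}_e) \approx \bm\theta^\times$ and $\mathbb{P}_a(\mathbf{R}_e)\mathbf{R}_e \approx \bm\theta^\times$ to first order, so $\dot{\mathbf{R}}_e \approx \dot{\bm\theta}^\times = -\lambda\,\bm\theta^\times$, i.e. $\dot{\bm\theta} = -\lambda\bm\theta$, which is exponentially stable with rate $\lambda$. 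Equivalently, near the equilibrium $V \asymp \tfrac12\lVert\bm\theta\rVert^2$ and $\dot V \approx -2\lambda V$, so $V$ — and hence the rotation error — decays exponentially; combined with the asymptotic convergence established by LaSalle, which places every almost-everywhere trajectory eventually inside this basin, the convergence is exponential from $t$ onward, and in fact globally exponential on the open dense set up to a transient. The main obstacle, and the part deserving the most care, is the measure-zero argument for the $\pi$-rotation equilibria: one must verify that each is hyperbolic (or at least that its stable set is a lower-dimensional manifold) so that the union of their stable manifolds is genuinely negligible, and that the complement is not just dense but has dense interior — this is exactly the subtlety that the \emph{almost global} definition in the excerpt is designed to accommodate, and it is where I would cite the topological nondegeneracy argument of \cite{Bhat:1998aa} or \cite{Gomez-Cortes:2019aa} rather than recomputing eigenvalues by hand.
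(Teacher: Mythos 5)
Your Lyapunov function and the computation of $\dot V \le 0$ match the paper's (up to a factor of $2$: since $\tr\bigl(\mathbb{P}_a(\mathbf{R}_e)^2\bigr) = -\langle \mathbb{P}_a(\mathbf{R}_e),\mathbb{P}_a(\mathbf{R}_e)\rangle = -\tfrac12\lVert\mathbb{P}_a(\mathbf{R}_e)^\vee\rVert^2$, the correct expression is $\dot V = -\tfrac{\lambda}{2}\lVert\mathbb{P}_a(\mathbf{R}_e)^\vee\rVert^2$). From there, however, you take a genuinely different and weaker route, and the step you yourself flag as delicate is where the gap lies. The $\pi$-rotation equilibria cannot be hyperbolic: they form a two-dimensional connected set of equilibria (diffeomorphic to $\mathbb{RP}^2$, as you note), so the linearization at each such point has at least two zero eigenvalues, and the ``hyperbolic, hence measure-zero stable manifold'' argument does not go through as stated. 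You would need a center/normally-hyperbolic analysis, or a citation doing that work for you. Moreover, LaSalle plus local linearization at $\mathbf{I}$ only yields asymptotic convergence followed by an eventually-exponential tail; turning that into the claimed exponential convergence requires bounding the transient, and gives no explicit rate.

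Both difficulties evaporate with one observation that your own function already provides: the bad set $\{\tr(\mathbf{R}_e)=-1\}$ is exactly the set of global maximizers of $V$ (where $V=4$), and $V$ is nonincreasing, so any trajectory with $V(0)<4$ satisfies $V(t)\le V(0)<4$ for all $t$ and can never approach the $\pi$-rotations --- their basin of attraction is just the set itself, which is closed with open dense complement. No stable-manifold argument is needed. The paper exploits precisely this, but quantitatively: writing $\mathbf{R}_e$ as a unit quaternion with $\tr(\mathbf{R}_e)=4q_0^2-1$, one finds the exact identity $\dot V = -2\lambda q_0^2\, V$, and monotonicity of $V$ bounds $q_0^2(t)\ge 1-\tfrac14 V(0) \ge \tfrac{\epsilon}{4}>0$ uniformly in $t$. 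This yields $\dot V \le -\alpha V$ with $\alpha = \tfrac12\lambda\epsilon$ for all $t\ge 0$, i.e.\ a single differential inequality that delivers the measure-zero exclusion and the exponential rate simultaneously, with no appeal to LaSalle, linearization, or invariant-manifold theory. I recommend replacing your third paragraph with this argument, or at minimum replacing the hyperbolicity claim with the sublevel-set observation above.
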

\begin{proof}
Consider the Lyapunov-like function $$V_R(t) = \tr\left(\mathbf{I}-\mathbf{R}_e\right) > 0,$$ which is lower-bounded by zero since $-1 \leq \tr(\mathbf{R}_e) \leq 3$ for all $\mathbf{R}_e \in SO(3)$, with $\tr(\mathbf{R}_e) = 3$ iff $\mathbf{R}_e = \mathbf{I}$. 
The time derivative of $V_R$ is given by
\begin{align*}
    \dot{V}_R &= -\tr(\dot{\mathbf{R}}_e) = \lambda \left\langle -\mathbb{P}_a(\mathbf{R}_e), 
    \mathbf{R}_e \right\rangle,\\
    &= -\lambda \left\langle \mathbb{P}_a(\mathbf{R}_e), \mathbb{P}_a(\mathbf{R}_e) + \mathbb{P}_s(\mathbf{R}_e)\right\rangle,\\
    &= \textstyle-\frac{\lambda}{2} \lvert\lvert \mathbb{P}_a(\mathbf{R}_e)^\vee \rvert\rvert^2 \leq 0,
\end{align*}
using the matrix inner product $\left\langle\mathbf{A},\mathbf{B}\right\rangle = \tr(\mathbf{A}^T\mathbf{B}),$ and the fact that $\left\langle \mathbb{P}_a(\mathbf{A}), \mathbb{P}_s(\mathbf{A})\right\rangle = 0,$ for $\mathbf{A}, \mathbf{B} \in \mathbb{R}^{3\times3}.$ 

In order to show almost global exponential convergence to $(\mathbf{I},\bm{0})$, we seek a strictly positive constant $\alpha > 0$ such that $$\dot{V}_R \leq -\alpha V_R.$$
To find this constant, let us recall that for the rotation matrix $\mathbf{R}_e$ there is an equivalent unit quaternion representation, $$\mathbf{z}(t) = q_o + q_1 i + q_2 j + q_3 k,$$ with $\tr(\mathbf{R}_e) = 4 q_0^2 -1,$ and $q_o^2 + q_1^2 + q_2^2 + q_3^2 = 1.$ Further, we can rewrite $\dot{V}_R,$
\begin{align*}
    \dot{V}_R &= \frac{\lambda}{2} \tr\left(\mathbf{R}_e \mathbf{R}_e - \mathbf{I} \right) = 2\lambda \left((q_0^2 - q_1^2 - q_2^2 - q_3^2)^2 - 1\right),
\end{align*} 
where the second equality follows from quaternion multiplication. Since $\mathbf{z}$ has unit norm, we can write
\begin{align*}
    \dot{V}_R &= 2 \lambda \Big(\left(2q_0^2 - 1\right)^2 - 1\Big) = 8 \lambda q_0^2 \left(q_0^2 - 1\right),\\
    &= - \left(2 \lambda q_0^2\right) V_R.
\end{align*}
Further, we note $q_0^2 = 0$ if and only if $\tr(\mathbf{R}_e) = -1$, which corresponds to the global maximum of $V_R.$

Let us assume our trajectory does not start at this point, i.e., there exists some $\epsilon > 0$ such that $4~\geq~V_R(0) + \epsilon.$ Then, since $\dot{V}_R \leq 0,$
\begin{align*}
    q_0^2(t) = 1 - \textstyle \frac{1}{4}V_R(t) \geq 1 - \frac{1}{4} V_R(0) > \frac{\epsilon}{4} > 0.
\end{align*}
Thus, for all $t \geq 0,$ we have $$\dot{V}_R \leq - \alpha V_R,$$ for $\alpha = \frac{1}{2}\lambda \epsilon > 0.$ Thus, almost all trajectories on $\mathbf{\sigma = 0}$ converge exponentially to the desired equilibrium.

\end{proof}

Thus, we can conclude that almost all trajectories satisfying $\mathbf{s} = 0$ will exponentially converge to tracking the desired position $\mathbf{x}_d$ and orientation $\mathbf{R}_d$\footnote{Note the composite error $\mathbf{s}$ discussed here is closely related to the ``sliding variable'' commonly used in sliding mode control \cite[Ch. 7]{Slotine:1991aa}, since, when when we constrain $\mathbf{s}=0$, the reduced-order system dynamics converge exponentially to tracking. However, the controller \eqref{eq:ctrl_1} proposed in the following discussion is not, in fact, a sliding mode controller because it does not include the discontinuous term typically used to guarantee finite-time convergence to $\mathbf{s}=0$; we show the proposed controller instead exhibits asymptotic tracking.}.

We further note that $\mathbf{s}$ can be interpreted as a ``velocity error,'' since we can write $$\mathbf{s} = \dot{\mathbf{q}} - \dot{\mathbf{q}}_r,$$ where $\dot{\mathbf{q}}_r$ is a ``reference velocity'' which augments the desired velocity $\dot{\mathbf{q}}_d$ with an additional term based on the pose error, $$\dot{\mathbf{q}}_r = \left[\begin{array}{c} \dot{\mathbf{x}}_d - \lambda \widetilde{\mathbf{x}} \\ \bm{\omega}_d - \lambda \mathbf{R}_d \mathbb{P}_a(\mathbf{R}_e)^\vee \end{array} \right].$$

\subsection{Decentralized Adaptive Trajectory Control}
We now return to Problem \ref{pb:control}, decentralized adaptive control of a common payload. We consider the Lagrangian dynamics
\begin{equation}
    \mathbf{H}(\mathbf{q})\ddot{\mathbf{q}} + \mathbf{C}(\mathbf{q},\dot{\mathbf{q}}) + \mathbf{g} = \textstyle \sum_{i=1}^{N} \mathbf{M}_i(\mathbf{q})\bm{\tau}_i,
    \label{eq:ham_dynamics}
\end{equation}
which, as shown previously, hold for collaborative manipulation tasks on both $SE(2)$ and $SE(3)$. We note that while the manipulation dynamics do not include a term $\mathbf{g},$ which models conservative forces such as gravity, we include it here for completeness, and to conform to the traditional ``manipulator equation.''

Let us consider a Lyapunov-like function candidate
\begin{align}
\begin{split}
    \textstyle V(t) = \frac{1}{2}\big[ \mathbf{s}^T \mathbf{H} \mathbf{s} + \sum_{i=1}^{N} &\widetilde{\mathbf{o}}_i^{\,T} \bm{\Gamma}_o^{-1}\widetilde{\mathbf{o}}_i + \widetilde{\mathbf{r}}_i^{\,T} \bm{\Gamma}_r^{-1} \widetilde{\mathbf{r}}_i \big],
    \label{eq:lyap}
\end{split}
\end{align}
where $\widetilde{\mathbf{o}}_i = \hat{\mathbf{o}}_i - \mathbf{o}_i$ is a parameter estimation error for agent $i$, with $\mathbf{o}_i$ being a vector of physical constants that define the payload's dynamics, and $\hat{\mathbf{o}}_i$ being agent $i$'s estimate of these parameters. Similarly, we let $\widetilde{\mathbf{r}}_i$ be the difference between the vector $\mathbf{r}_i$ from the measurement point to agent $i$, and the agent's estimate $\hat{\mathbf{r}}_i$. We further let $\bm{\Gamma}_o$ and $\bm{\Gamma}_r$ be positive definite matrices which correspond to adaptation gains.

Differentiating $V$ yields
\begin{equation*}
     \textstyle \dot{V}(t) = \mathbf{s}^T \mathbf{H} \dot{\mathbf{s}} + \frac{1}{2} \mathbf{s}^T \dot{\mathbf{H}} \mathbf{s} + \sum_{i=1}^{N} \widetilde{\mathbf{o}}_i^{\,T} \bm{\Gamma}_o^{-1} \dot{\hat{\mathbf{o}}} + \widetilde{\mathbf{r}}_i^{\,T} \bm{\Gamma}_r^{-1}\dot{\hat{\mathbf{r}}}_i,
\end{equation*}
using the fact that the parameter error derivatives $\dot{\widetilde{\mathbf{o}}}_i, \dot{\widetilde{\mathbf{r}}}_i$ are equal to the parameter estimate derivatives $\dot{\hat{\mathbf{o}}}_i, \dot{\hat{\mathbf{r}}}_i$, since the true parameter values are constant.

Further, since we can write $\dot{\mathbf{s}} = \ddot{\mathbf{q}} - \ddot{\mathbf{q}}_r$, we can expand the term
\begin{align*}
      \textstyle \mathbf{s}^T \mathbf{H} \dot{\mathbf{s}} &= \mathbf{s}^T\mathbf{H}(\ddot{\mathbf{q}} - \ddot{\mathbf{q}}_r),\\
     &= \mathbf{s}^T \left(\textstyle \sum_{i=1}^{N} \mathbf{M}_i \bm{\tau}_i - \mathbf{C} \dot{\mathbf{q}} - \mathbf{g} - \mathbf{H}\ddot{\mathbf{q}}_r\right),\\
     &= \mathbf{s}^T\left(\textstyle\sum_{i=1}^{N} \mathbf{M}_i\bm{\tau}_i - \mathbf{H}\ddot{\mathbf{q}}_r - \mathbf{C}\dot{\mathbf{q}}_r - \mathbf{g} - \mathbf{C} \mathbf{s}\right),
\end{align*}
using the system dynamics \eqref{eq:ham_dynamics} to substitute for $\mathbf{H}\ddot{\mathbf{q}}$, and the fact that $\dot{\mathbf{q}} = \mathbf{s} + \dot{\mathbf{q}}_r$.

Further, since the system is over-actuated, meaning it has many more control inputs than degrees of freedom, there exist many sets of control inputs $\bm{\tau}_i$ which produce the same object dynamics. Thus, it is sufficient for each agent to take some portion of the control effort, which, combined, enables tracking of the desired trajectory. To this end, let us introduce a set of $N$ positive constants $\mathbf{\alpha}_i$, with $\sum_{i=1}^{N} \alpha_i = 1$; a natural choice is $\alpha_i = \frac{1}{N},$ which we use throughout the paper.


Using these constants, we can further exploit an important property of this system, \textit{linear parameterizability}. Importantly, for the control tasks considered in this work, the physical parameters to be estimated appear only linearly in the dynamics. Thus, we can define a known regressor matrix $\mathbf{Y}_o(\mathbf{q},\dot{\mathbf{q}},\dot{\mathbf{q}}_r, \ddot{\mathbf{q}}_r),$ such that $$\alpha_i \left(\mathbf{H}(\mathbf{q})\ddot{\mathbf{q}}_r + \mathbf{C}(\mathbf{q},\dot{\mathbf{q}})\dot{\mathbf{q}}_r + \mathbf{G} \right) = \mathbf{Y}_o(\mathbf{q},\dot{\mathbf{q}},\dot{\mathbf{q}}_r, \ddot{\mathbf{q}}_r) \mathbf{o}_i,$$ where $\mathbf{o}_i = \alpha_i \mathbf{o}$ is a vector of the object's physical parameters, $\mathbf{o},$ such as mass or moments of inertia, scaled by $\alpha_i.$ We further note that 
\begin{align*}
\textstyle\sum_{i=1}^{N} \mathbf{Y}_o \mathbf{o}_i &= \textstyle\sum_{i=1}^{N} \alpha_i\left(\mathbf{H}\ddot{\mathbf{q}}_r + \mathbf{C}\dot{\mathbf{q}}_r +\mathbf{g}\right)\\
&= \mathbf{H}\ddot{\mathbf{q}}_r + \mathbf{C}\dot{\mathbf{q}}_r +\mathbf{g}.
\end{align*}

In short, the constants $\alpha_i$ are a mathematical convenience used to allocate control effort among the agents. Since they now seek to estimate the scaled physical parameters $\mathbf{o}_i$, the constants need not be known \textit{a priori}. Recalling the analogy between our proposed controller and quorum sensing \cite{Russo:2010aa}, we can interpret these constants as akin to coupling weights between the agents.

Using this regressor, we can write \begin{align}
    \dot{V}(t) &= \textstyle \sum_{i=1}^{N} \mathbf{s}^T\left(\mathbf{M}_i \bm{\tau}_i - \mathbf{Y}_o \mathbf{o}_i\right) + \frac{1}{2}\mathbf{s}^T\left(\dot{\mathbf{H}}-2\mathbf{C}\right)\mathbf{s} \nonumber \\ &\quad\quad + \widetilde{\mathbf{o}}_i^{\,T} \bm{\Gamma}_o^{-1} \dot{\hat{\mathbf{o}}} + \widetilde{\mathbf{r}}_i^{\,T} \bm{\Gamma}_r^{-1}\dot{\hat{\mathbf{r}}}_i.  \label{eq:vdot_partial}
\end{align}
As noted previously, $\dot{\mathbf{H}} - 2\mathbf{C}$ is skew-symmetric, thus its quadratic form is equal to zero for any vector $\mathbf{s}$.

We will now outine the proposed control and adaptation laws. Let \begin{equation}
    \bm{\tau}_i = \widehat{\mathbf{M}}_i^{-1}\mathbf{F}_i,
    \label{eq:ctrl_1}
\end{equation} where $\mathbf{M}_i(\hat{\mathbf{r}}_i)^{-1}$ is the inverse of a grasp matrix in $\mathcal{M}$ with a moment arm of $\hat{\mathbf{r}}_i$, and
\begin{equation}
 \mathbf{F}_i = \mathbf{Y}_o \hat{\mathbf{o}}_i - \mathbf{K}_D \mathbf{s},
 \label{eq:ctrl_2}
\end{equation}
where $\mathbf{K}_D$ is some positive definite matrix. Intuitively, the term $\mathbf{F}_i$ combines a feedforward term using the estimated object dynamics with a simple PD term, $-\mathbf{K}_D \mathbf{s}$. This control law alone would lead to successful tracking if all agents were attached at the measurement point. Each agent then multiplies $\mathbf{F}_i$ by $\widehat{\mathbf{M}}_i^{-1}$ in an effort to compensate for the torque its applied force generates about the measurement point. 

We further note, using \eqref{eq:grasp_mats}, that the product $$\mathbf{M}_i \widehat{\mathbf{M}}_i^{-1}=\big(\mathbf{I}-\widetilde{\mathbf{M}}_i\big),$$ where $\widetilde{\mathbf{M}}_i = \widehat{\mathbf{M}}_i - \mathbf{M}_i.$ We can further observe that the product $\widetilde{\mathbf{M}}_i \mathbf{F}_i$ is linearly parameterizable, with $$-\widetilde{\mathbf{M}}_i\mathbf{F}_i = \mathbf{Y}_g(\mathbf{F}_i,\mathbf{q})\widetilde{\mathbf{r}}_i.$$

Using this expression, we propose the adaptation laws
\begin{align}
    \dot{\hat{\mathbf{o}}}_i &= -\Gamma_o \mathbf{Y}_o(\mathbf{q},\dot{\mathbf{q}},\dot{\mathbf{q}}_r,\ddot{\mathbf{q}}_r)^T \mathbf{s}, \label{eq:adapt_1}\\
    \dot{\hat{\mathbf{r}}}_i &= -\Gamma_r \mathbf{Y}_g(\mathbf{F}_i,\mathbf{q})^{T}\mathbf{s}.
    \label{eq:adapt_2}
\end{align}

Note that all terms in the proposed control and adaptation laws can be computed using only local information (using the desired trajectory, shared measurement, and local parameter estimates). We now reach the main theoretical result of this paper, i.e., the boundedness of all closed-loop signals, and almost-global asymptotic convergence of the system trajectory to the desired trajectory.
\begin{theorem}
Consider the control laws \eqref{eq:ctrl_1}-\eqref{eq:ctrl_2}, and the adaptation laws \eqref{eq:adapt_1}-\eqref{eq:adapt_2}. Under the proposed adaptive controller, all closed-loop signals remain bounded, and the system converges to the desired trajectory from almost all initial conditions. \label{thm:controller}
\end{theorem}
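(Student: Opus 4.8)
The plan is to run a Lyapunov/Barbalat argument on the function $V$ of \eqref{eq:lyap}, in two stages: first close the loop to get $\dot V\le 0$ (boundedness plus $\mathbf{s}\in L_2$), then bootstrap boundedness of every signal so Barbalat's lemma yields $\mathbf{s}\to 0$, and finally show that $\mathbf{s}\to 0$ drives the reduced-order pose error to the desired trajectory from almost all initial conditions. Starting from \eqref{eq:vdot_partial}, I would substitute the control law \eqref{eq:ctrl_1}--\eqref{eq:ctrl_2} and use the grasp-matrix identity $\mathbf{M}_i\widehat{\mathbf{M}}_i^{-1} = \mathbf{I} - \widetilde{\mathbf{M}}_i$ together with the linear parameterization $-\widetilde{\mathbf{M}}_i\mathbf{F}_i = \mathbf{Y}_g(\mathbf{F}_i,\mathbf{q})\widetilde{\mathbf{r}}_i$, so that each summand becomes
\[
\mathbf{s}^T\!\left(\mathbf{M}_i\bm{\tau}_i - \mathbf{Y}_o\mathbf{o}_i\right) = \mathbf{s}^T\mathbf{Y}_o\widetilde{\mathbf{o}}_i + \mathbf{s}^T\mathbf{Y}_g\widetilde{\mathbf{r}}_i - \mathbf{s}^T\mathbf{K}_D\mathbf{s}.
\]
The skew-symmetry of $\dot{\mathbf{H}}-2\mathbf{C}$ kills its quadratic form, and the adaptation laws \eqref{eq:adapt_1}--\eqref{eq:adapt_2} give $\widetilde{\mathbf{o}}_i^{\,T}\bm{\Gamma}_o^{-1}\dot{\hat{\mathbf{o}}}_i = -\mathbf{s}^T\mathbf{Y}_o\widetilde{\mathbf{o}}_i$ and $\widetilde{\mathbf{r}}_i^{\,T}\bm{\Gamma}_r^{-1}\dot{\hat{\mathbf{r}}}_i = -\mathbf{s}^T\mathbf{Y}_g\widetilde{\mathbf{r}}_i$, so all cross terms cancel and $\dot V = -\sum_{i=1}^N\mathbf{s}^T\mathbf{K}_D\mathbf{s} = -N\mathbf{s}^T\mathbf{K}_D\mathbf{s}\le 0$. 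Hence $V$ is bounded and non-increasing, giving $\mathbf{s},\widetilde{\mathbf{o}}_i,\widetilde{\mathbf{r}}_i\in L_\infty$ (so $\hat{\mathbf{o}}_i,\hat{\mathbf{r}}_i\in L_\infty$), and, by integrating, $\mathbf{s}\in L_2$.

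Next I would propagate boundedness through the closed loop. Assuming the reference $\mathbf{q}_d,\dot{\mathbf{q}}_d,\ddot{\mathbf{q}}_d$ is bounded: since $\mathbf{s}_\ell\in L_\infty$ and $\dot{\widetilde{\mathbf{x}}} = -\lambda\widetilde{\mathbf{x}} + \mathbf{s}_\ell$ is a stable first-order filter, $\widetilde{\mathbf{x}},\dot{\widetilde{\mathbf{x}}}\in L_\infty$; and since $\mathbf{R}_e,\mathbf{R}_d\in SO(3)$ are bounded, $\bm{\sigma}\in L_\infty$ forces $\bm{\omega}_e\in L_\infty$. Thus $\dot{\mathbf{q}}\in L_\infty$ and $\mathbf{q}$ is bounded, so $\mathbf{H},\mathbf{H}^{-1},\mathbf{C}$ are bounded; differentiating $\dot{\mathbf{q}}_r$ and using \eqref{eq:rot_err_dynamics} for $\dot{\mathbf{R}}_e$ gives $\ddot{\mathbf{q}}_r\in L_\infty$, so the regressors $\mathbf{Y}_o,\mathbf{Y}_g$ and the commands $\mathbf{F}_i,\bm{\tau}_i$ are bounded. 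Then \eqref{eq:ham_dynamics} yields $\ddot{\mathbf{q}}\in L_\infty$, hence $\dot{\mathbf{s}} = \ddot{\mathbf{q}}-\ddot{\mathbf{q}}_r\in L_\infty$ (and $\dot{\hat{\mathbf{o}}}_i,\dot{\hat{\mathbf{r}}}_i\in L_\infty$). Applying Barbalat's lemma to $g(t):=N\mathbf{s}^T\mathbf{K}_D\mathbf{s}\ge 0$, which has finite integral $V(0)-V(\infty)$ and bounded derivative $2N\mathbf{s}^T\mathbf{K}_D\dot{\mathbf{s}}$, gives $g\to0$, i.e. $\mathbf{s}\to0$.

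Finally I would show $\mathbf{s}\to0$ drives the pose error to zero from almost all initial conditions. The linear part is immediate: $\dot{\widetilde{\mathbf{x}}} = -\lambda\widetilde{\mathbf{x}}+\mathbf{s}_\ell$ with $\mathbf{s}_\ell\to0$ forces $\widetilde{\mathbf{x}},\dot{\widetilde{\mathbf{x}}}\to0$. For the orientation, substituting $\bm{\omega}_e = \bm{\sigma}-\lambda\mathbf{R}_d\mathbb{P}_a(\mathbf{R}_e)^\vee$ into \eqref{eq:rot_err_dynamics} gives $\dot{\mathbf{R}}_e = -\lambda\mathbb{P}_a(\mathbf{R}_e)\mathbf{R}_e + (\mathbf{R}_d^T\bm{\sigma})^\times\mathbf{R}_e$, a vanishing perturbation of the nominal reduced-order flow of Theorem~1, which is almost-globally exponentially stable at $\mathbf{R}_e=\mathbf{I}$. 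Reusing $V_R=\tr(\mathbf{I}-\mathbf{R}_e)$, the estimate of Theorem~1 becomes $\dot V_R\le -2\lambda q_0^2 V_R + c\,\lVert\bm{\sigma}\rVert\sqrt{V_R}$ with $q_0^2 = 1-\tfrac14 V_R$; for any trajectory not starting on the antipodal set $\{\tr(\mathbf{R}_e)=-1\}$ (a proper submanifold of measure zero), a comparison argument using $\bm{\sigma}\to0$ shows $V_R$ stays bounded away from $4$ for large $t$ and then decays to zero, so $\mathbf{R}_e\to\mathbf{I}$. Combining, $(\mathbf{x},\mathbf{R})\to(\mathbf{x}_d,\mathbf{R}_d)$.

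The main obstacle is this last step: making rigorous that the vanishing perturbation $\bm{\sigma}$ cannot enlarge the non-converging set beyond measure zero, i.e. ruling out trajectories being driven into the antipodal configuration during the transient. This needs the exponential (not merely asymptotic) stability of the nominal rotational equilibrium together with $\bm{\sigma}\to0$ (and $\bm{\sigma}\in L_2$), via a total-stability/converse-Lyapunov estimate and the fact that the stable set of the antipodal configuration remains measure zero in the full closed loop, rather than a naive input-to-state bound.
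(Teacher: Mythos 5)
Your proposal follows essentially the same route as the paper: close the loop to obtain $\dot V = -N\mathbf{s}^T\mathbf{K}_D\mathbf{s}\le 0$ using the grasp-matrix identity, the linear parameterizations, and the skew-symmetry of $\dot{\mathbf{H}}-2\mathbf{C}$; bootstrap boundedness of all closed-loop signals; apply Barbalat's lemma to conclude $\mathbf{s}\to 0$; and then pass to almost-global convergence of the pose error. If anything, your write-up is more careful than the paper's in two places: you explicitly propagate boundedness through $\dot{\mathbf{q}}$, $\ddot{\mathbf{q}}_r$, and the regressors (where the paper simply asserts that all right-hand-side signals are bounded), and you correctly flag that the final step --- going from the exact reduced-order analysis on $\bm{\sigma}=0$ in Theorem~1 to the perturbed rotational dynamics with $\bm{\sigma}\to 0$ --- needs a vanishing-perturbation/comparison argument, a subtlety the paper passes over by directly asserting the conclusion.
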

\begin{proof}
Substituting these control and adaptation laws into the previous expression \eqref{eq:vdot_partial} of $\dot{V}(t)$ yields 
\begin{align*}
     \dot{V}(t) &= \textstyle\sum_{i=1}^{N} \mathbf{s}^T \left( \mathbf{F}_i + \mathbf{Y}_g \widetilde{\mathbf{r}}_i - \mathbf{Y}_o \mathbf{o}_i \right) \\ &\quad\quad + \widetilde{\mathbf{o}}_i^{\,T} \bm{\Gamma}_o^{-1} \dot{\hat{\mathbf{o}}} + \widetilde{\mathbf{r}}_i^{\,T} \bm{\Gamma}_r^{-1}\dot{\hat{\mathbf{r}}}_i,\\
    &= \textstyle \sum_{i=1}^{N} -\mathbf{s}^T \mathbf{K}_D \mathbf{s} + \widetilde{\mathbf{o}}_i^{\,T}\left(\mathbf{Y}_o \widetilde{\mathbf{o}}_i + \bm{\Gamma}_o^{-1}\dot{\hat{\mathbf{o}}}_i\right) \\ &\quad\quad + \widetilde{\mathbf{r}}^{\,T}\left(\mathbf{Y}_g \widetilde{\mathbf{r}}_i + \bm{\Gamma}_r^{-1}\dot{\hat{\mathbf{r}}}_i\right),
\end{align*}
which, upon substituting the adaptation laws \eqref{eq:adapt_1}-\eqref{eq:adapt_2}, yields
\begin{equation}
    \dot{V}(t) = N\left(-\mathbf{s}^T\mathbf{K}_D\mathbf{s}\right) \leq 0.
    \label{eq:vdot_final}
\end{equation}
As shown in Section \ref{sec:composite_proof}, when $\mathbf{s} = 0,$ almost all trajectories converge exponenially to tracking the desired trajectory. Thus, if we have $\mathbf{s} \rightarrow \bm{0}$, this is sufficient to show the body tracks the desired trajectory asymptotically. To this end, we now show that $\dot{V}(t) \rightarrow 0$ as $t \rightarrow \infty$, since from \eqref{eq:vdot_final} it is apparent that this is sufficient for $\mathbf{s} \rightarrow \bm{0}$. 

To do this, we again invoke Barbalat's Lemma \cite[Chapter~4.5]{Slotine:1991aa}, which states that $\dot{V} \rightarrow 0$ if $V$ has a finite limit, and $\dot{V}$ is uniformly continuous. From \eqref{eq:vdot_final}, we see that $V$ has a finite limit; thus, we show $\ddot{V}$ is bounded.

From the expression of $\ddot{V} = -2N\left(\mathbf{s}^T\mathbf{K}_D\dot{\mathbf{s}}\right)$, we have $\ddot{V}$ is bounded if $\mathbf{s}, \dot{\mathbf{s}}$ are bounded. Since $\dot{V} \leq 0$, and $V$ is lower-bounded, we have that $V$ is bounded, which implies $\mathbf{s}, \widetilde{\mathbf{o}}_i, \widetilde{\mathbf{r}}_i$ are bounded for all $i$. 


Further, we can write
\begin{align*}
    \mathbf{H}\dot{\mathbf{s}} &= \mathbf{H}(\ddot{\mathbf{q}}-\ddot{\mathbf{q}}_r),\\
    &= \textstyle \sum_{i=1}^{N} \left(\mathbf{F}_i + \mathbf{Y}_g \widetilde{\mathbf{r}}_i\right) - \mathbf{Y}_o \mathbf{o}_i - \mathbf{C}\mathbf{s},\\
    &= \textstyle \sum_{i=1}^{N} \left(\mathbf{Y}_o \widetilde{\mathbf{o}}_i + \mathbf{Y}_g \widetilde{\mathbf{r}}_i\right) - \left(N \mathbf{K}_D + \mathbf{C}\right) \mathbf{s},
\end{align*}
noting that all signals on the right-hand side are bounded (since the desired trajectory $\mathbf{q}_d$ and its derivatives are bounded by design). Finally, since $\mathbf{H}$ is positive definite, we know $\mathbf{H}^{-1}$ exists, which implies that $\dot{\mathbf{s}}$ is bounded. 

Thus, we have shown all signals are bounded. Further, since $\mathbf{s} \rightarrow \bm{0}$ as $t \rightarrow \infty$, then, except for the singularity $\tr\left(\mathbf{R}_e(0)\right) \neq -1,$ we have $\dot{\widetilde{\mathbf{q}}}, \widetilde{\mathbf{x}} \rightarrow \bm{0}, \mathbf{R}_d^T \mathbf{R} \rightarrow \mathbf{I}$ in the limit, which completes the proof. 
\end{proof}

Similarly to \cite{Slotine:1987aa}, Theorem \ref{thm:controller} shows boundedness of all closed-loop signals and almost-global asymptotic convergence to the desired trajectory.
Many variations and extensions of \cite{Slotine:1987aa} have been proposed.
For instance, \cite{Spong:1990aa} demonstrated that an additional term could be added to the Lyapunov-like function of \cite{Slotine:1987aa} to show that the desired trajectory and true parameters are stable in the sense of Lyapunov. However, this result relied on the particular linear structure of the composite error $\mathbf{s}$ which does not hold in our case.

\subsection{Lack of Persistent Excitation}
\label{subsec:pe}
While we have shown that under the proposed controller, the tracking errors approach zero in the limit, we have not reasoned about the asymptotic behavior of the parameter errors $\widetilde{\mathbf{o}}_i, \widetilde{\mathbf{r}}_i$. While these signals must remain bounded, it is unclear under what conditions the parameter estimates converge to their true values.

For most (centralized) adaptive controllers, a condition termed ``persistent excitation'' is sufficient to show the parameter errors are driven to zero asymptotically. 
\begin{definition}[Persistent Excitation] Consider $\mathbf{W}(t) \in \mathbb{R}^{m \times n},$ a matrix-valued, time-varying signal. We say $\mathbf{W}(t)$ is persistently exciting if, for all $t > 0,$ there exist $\gamma, T > 0$ such that $$\frac{1}{T}\int_t^{t+T} \mathbf{W}(\tau)^T \mathbf{W}(\tau) d\tau > \gamma \mathbf{I}.$$ 
\end{definition}

Let us first consider the case of single-agent manipulation, i.e., the case where $N = 1.$ Further, let us define $\mathbf{Y}_1 = \left[ \mathbf{Y}_o, \mathbf{Y}_r \right],$ and $\widetilde{\mathbf{a}}_1 = \left[ \widetilde{\mathbf{o}}^{\,T}, \widetilde{\mathbf{r}}^{\,T} \right]^T ,$ meaning we stack the parameter errors row-wise, and their respective regressors column-wise. 

In this case, we can again write the closed-loop dynamics as $$\mathbf{H}\dot{\mathbf{s}} + \left(\mathbf{C}+\mathbf{K}_D\right)\mathbf{s} = \mathbf{Y}_1\widetilde{\mathbf{a}}_1,$$ which, since we know $\mathbf{s},\dot{\mathbf{s}} \rightarrow \bm{0}$ in the limit, implies that $\widetilde{\mathbf{a}}_1$ converges to the nullspace $\mathcal{N}(\mathbf{Y}_1)$ of $\mathbf{Y}_1;$ this does not, however, imply that $\widetilde{\mathbf{a}}_1 \rightarrow \bm{0}.$

Instead, it is sufficient 
to require that the matrix $\mathbf{Y}_{1,d}(t)$ is persistently exciting \cite{Slotine:1987ab}, where $\mathbf{Y}_{1,d}(t) = \mathbf{Y}_1(\mathbf{q}_d,\dot{\mathbf{q}}_d,\dot{\mathbf{q}}_d,\ddot{\mathbf{q}}_d)$. However, while there exist simple conditions on the reference signal to guarantee persistent excitation for linear, time-invariant systems, to date, such simple conditions do not exist for nonlinear systems. In practice, however, reference signals with a large amount of frequency content are typically used to achieve persistent excitation.

We now turn to the multi-agent case. Let us define $\mathbf{Y}_N = \left[ \mathbf{Y}_o, \mathbf{Y}_r, \cdots, \mathbf{Y}_o, \mathbf{Y}_r\right]$ and $\widetilde{\mathbf{a}}_N = \left[\widetilde{\mathbf{o}}_1^{\,T},\widetilde{\mathbf{r}}_1^{\,T}, \cdots, \widetilde{\mathbf{o}}_N^{\,T},\widetilde{\mathbf{r}}_N^{\,T}\right]^T,$ i.e., we stack $\mathcal{F}$ sets of the regressors $\mathbf{Y}_o, \mathbf{Y}_r$ column-wise, and append all $\mathcal{F}$ sets of parameter errors row-wise. This allows us to again write the closed-loop dynamics as $$\mathbf{H}\dot{\mathbf{s}} + \left(\mathbf{C}+\mathbf{K}_D\right)\mathbf{s} = \mathbf{Y}_N\widetilde{\mathbf{a}}_N.$$ Thus, to ensure that all parameter errors $\widetilde{\mathbf{a}}_i$ converge to zero, we hope to find conditions under which $\mathbf{Y}_{N,d}(\mathbf{q}_d,\dot{\mathbf{q}}_d,\dot{\mathbf{q}}_d,\ddot{\mathbf{q}}_d)$ is persistently exciting. Thus, for all $t > 0,$ we require that there exist $T, \gamma > 0$ such that $$\int_{t}^{t+T} \mathbf{Y}_{N,d}^T(\tau) \mathbf{Y}_{N,d}(\tau) d\tau \geq \gamma \mathbf{I}.$$

However, examining the structure of the inner product $\mathbf{Y}_{N,d}^T \mathbf{Y}_{N,d}$ reveals
$$\resizebox{\columnwidth}{!}{%
$\mathbf{Y}_{N,d}^T(t) \mathbf{Y}_{N,d}(t) = \left[\begin{array}{cccc}\mathbf{Y}_o^T \mathbf{Y}_o & \mathbf{Y}_o^T \mathbf{Y}_r & \cdots & \mathbf{Y}_o^T \mathbf{Y}_r \\ \mathbf{Y}_r^T \mathbf{Y}_o & \mathbf{Y}_r^T \mathbf{Y}_r & \cdots & \mathbf{Y}_r^T\mathbf{Y}_r \\ \vdots & &\ddots & \vdots \\ \mathbf{Y}_r^T \mathbf{Y}_o &\mathbf{Y}_r^T \mathbf{Y}_r &\cdots &\mathbf{Y}_r^T \mathbf{Y}_r\end{array}\right],$}$$
which we note has the same two (block) rows repeated $\mathcal{F}$ times. Further, we note the integral $$\int_t^{t+T} \mathbf{Y}_{N,d}^T(\tau) \mathbf{Y}_{N,d}(\tau)d\tau$$ will have the same block structure, meaning the integrated matrix must be rank deficient. 

Thus, regardless of the reference signal $\mathbf{q}_d$, $\mathbf{Y}_{N,d}$ cannot be persistently exciting; therefore we can make no claim on the asymptotic behavior of the individual parameter errors $\widetilde{\mathbf{a}}_i$, except that they remain bounded, and their derivative $\dot{\widetilde{\mathbf{a}}}_i$ converges to zero.

However, we can also write the closed-loop dynamics as
\begin{align*}
    \mathbf{H}(\mathbf{q})\dot{\mathbf{s}} + (\mathbf{C}+\mathbf{K}_D)\mathbf{s} &= \textstyle \sum_{i=1}^{N} \mathbf{Y}_1 \widetilde{\mathbf{a}}_i,\\
    &= N \mathbf{Y}_1 \left( \textstyle \frac{1}{N} \sum_{i=1}^{N} \widetilde{\mathbf{a}}_i \right),
\end{align*}
which implies, if $\mathbf{Y}_1$ is persistently exciting, that the average parameter error $\frac{1}{N} \sum_{i=1}^{N} \widetilde{\mathbf{a}}_i$ converges to zero. 

Thus, we can interpret the lack of a persistently exciting signal $\mathbf{q}_d$ as a reflection of the redundancy inherent in the system; since there exists a subspace of individual wrenches $\bm{\tau}_i$ which produce the same total wrench $\bm{\tau}$ on the body, even a signal which would be persistently exciting for a single agent can only guarantee that the average parameter error vanishes.

Interestingly, this is in many ways dual to the results in \cite{Wensing:2018aa}, wherein the authors consider the problem of adaptive control for cloud-based robots. In this case, the authors have a large number of decoupled, but identical, manipulators, which they aim to control adaptively using a shared parameter estimate. The authors find that parameter convergence is guaranteed if the sum of the regressor matrices is persistently exciting, a much weaker condition than any single robot executing a persistenly exciting trajectory. On the other hand, in this setting, we have a single physical system which is controlled collaboratively by many robots using individual parameter estimates; we find that in this case persistent excitation requires not just a stronger condition on $\mathbf{q}_d$, but is in fact impossible to achieve.
\subsection{Friction and Dissipative Dynamics}
\label{subsec:friction}
Finally, in some systems (such as the planar manipulation task), there are non-conservative forces which act on the payload due to effects such as friction. The simplest model treats these forces as an additive body-frame disturbance $\mathbf{D}(\mathbf{q})\dot{\mathbf{q}}$ which is linear in the velocity of the measurement point, where $\mathbf{D}(\mathbf{q}) = 
\mathbf{R}\bm{\Lambda}_D\mathbf{R}^T$, for a constant positive semidefinite matrix of friction coefficients $\bm{\Lambda}_D$. It can be easily verified that such a model is appropriate for viscous sliding friction. 

This friction can be compensated quite easily using an additional term in the control law, namely
$$\mathbf{F}_i = \mathbf{Y}_o \hat{\mathbf{o}}_i + \mathbf{Y}_f \hat{\mathbf{f}}_i - \mathbf{K}_D \mathbf{s},$$
where $\mathbf{Y}_f(\mathbf{q},\dot{\mathbf{q}}_r) \mathbf{f}_i = \alpha_i \mathbf{D}(\mathbf{q}) \dot{\mathbf{q}}_r$. If we additionally use the adaptation law $$\dot{\hat{\mathbf{f}}}_i = -\bm{\Gamma}_f\mathbf{Y}_f^T(\mathbf{q},\dot{\mathbf{q}}_r)\mathbf{s},$$
with positive definite $\bm{\Gamma}_D$, one can show this results in $$\dot{V} = -\mathbf{s}^T \left(N\mathbf{K}_D + \mathbf{D}(\mathbf{q})\right)\mathbf{s},$$
recalling that $\mathbf{D}\mathbf{s} = \mathbf{D}\dot{\mathbf{q}}-\mathbf{D}\dot{\mathbf{q}}_r.$ Thus, in effect, we have increased the PD gain with an additional positive semi-definite term $\mathbf{D}(\mathbf{q})$, yielding a higher ``effective'' $\mathbf{K}_D$ matrix without increasing the gains directly.

However, since such an expression is not obtainable in general, e.g. in the case of Coulomb friction, we turn instead to the case when frictional forces are modeled as being applied at the attachment point of each agent. In this case, the frictional forces are also pre-multiplied by the grasp matrices, resulting in the dynamics $$\mathbf{H}\ddot{\mathbf{q}} + \mathbf{C}\dot{\mathbf{q}}+ \mathbf{g} = \textstyle \sum_{i=1}^{N} \mathbf{M}_i \left(\bm{\tau}_i - \mathbf{D} \mathbf{v}_i - \mathbf{D}_C \sgn( \mathbf{v}_i)\right),$$
where $\mathbf{v}_i = \mathbf{M}_i^T \dot{\mathbf{q}}$ is the linear and angular velocity of the robot in the global frame, $\mathbf{D}$ and $\mathbf{D}_C$ are positive semidefinite matrices of friction coefficients, and $\sgn(\cdot)$ is the element-wise sign function.

\begin{table}
    \centering
    \ra{1.3}
    \resizebox{0.45\columnwidth}{!}{%
    \begin{tabular}{@{}ll@{}}\toprule
        \textbf{parameter} & \textbf{value}\\
        \midrule
        $m$ & \num{1.89e4} \si{kg}\\
        $I_{xx}$ & \num{1.54e4} \si{kg.m^2}\\
        $I_{yy}$ & \num{1.54e4} \si{kg.m^2}\\
        $I_{zz}$ & \num{2.37e3} \si{kg.m^2}\\
        $\mathbf{r}_p$ & $\left[0, 0, 1.5\right]$ \si{m}\\
        $\mathbf{r}_1$ & $\left[0, 0, 1.5\right]$ \si{m}\\
        $\mathbf{r}_2$ & $\left[0, 0, -1.5\right]$ \si{m}\\
        $\mathbf{r}_3$ & $\left[0.5, 0, 0\right]$ \si{m}\\
        $\mathbf{r}_4$ & $\left[-0.5, 0, 0\right]$ \si{m}\\
        $\mathbf{r}_5$ & $\left[0, 0.5, 0\right]$ \si{m}\\
        $\mathbf{r}_5$ & $\left[0, -0.5, 0\right]$ \si{m}\\
        \bottomrule\\
    \end{tabular}}
    \caption{Parameter Values for Simulation in $SE(3)$}
    \label{tab:params}
\end{table}

These terms can again be compensated, now by augmenting the ``shifted'' control law \eqref{eq:ctrl_1},
$$\bm{\tau}_i = \widehat{\mathbf{M}}_i^{-1}\mathbf{F}_i + \mathbf{Y}_{D} \hat{\mathbf{d}}_i + \mathbf{Y}_c \hat{\mathbf{c}}_i ,$$
where we have $$\mathbf{Y}_D(\mathbf{q},\dot{\mathbf{q}}_r)\mathbf{d}_i = \mathbf{M}_i \mathbf{D} \mathbf{M}_i^T \dot{\mathbf{q}}_r,$$ and $$\mathbf{Y}_C(\mathbf{q},\mathbf{v}_i) \mathbf{c}_i = \mathbf{M}_i \mathbf{D}_C \sgn(\mathbf{v}_i),$$ which can be computed assuming each agent can measure its own linear and angular rate $\mathbf{v}_i$.

Using this new control law, along with adaptation laws
\begin{align*}
    \dot{\hat{\mathbf{d}}}_i &= -\bm{\Gamma}_D \mathbf{Y}_D^T \mathbf{s},\\
    \dot{\hat{\mathbf{c}}}_i &= -\bm{\Gamma}_C \mathbf{Y}_C^T \mathbf{s},
\end{align*}
results in $$\dot{V} = -\mathbf{s}^T\left(N\mathbf{K}_D + \mathbf{M}_i\mathbf{D}\mathbf{M}_i^T\right)\mathbf{s},$$ which again adds a positive definite term $\mathbf{M}_i \mathbf{D} \mathbf{M}_i^T$ to  $\dot{V}$, using the viscous friction applied at each point. We note that the Coulomb friction must be cancelled exactly, due to the presence of unknown parameters (namely $\mathbf{r}_i$) in the $\sgn(\cdot)$ nonlinearity.
\section{Simulation Results}

We simulated the performance of the proposed controller for a collaborative manipulation task in $SE(3)$ to verify the previous theoretical results. To this end, we constructed a simulation scenario where a group of $N = 6$ space robots are controlling a large, cylindrical rigid body (such as a rocket body) to track a desired periodic trajectory $\mathbf{q}_d(t)$.

Table \ref{tab:params} lists the parameters used in the simulation, along with their values. 
The payload was a cylinder with a height of \num{1.5} \si{m} and radius \num{0.5} \si{m}. The agents were placed symmetrically along the principal axes of the body, with the measurement point $\mathbf{r}_p$ collocated with the first agent, $\mathbf{r}_1$, replicating the scenario when one agent broadcasts its measurements to the group.
The reference trajectory $\mathbf{q}_d$ had desired linear and angular rates $\dot{\mathbf{q}}_d$ which were sums of sinusoids, e.g. $$v_{x,d}(t) = \sum_{i=1}^{n_f} \alpha_i \cos(\omega_i t),$$
with the number of frequencies $n_f = 5$. The desired pose $\mathbf{q}_d$ started from an initial state $\mathbf{q}_d(0)$ and forward-integrated the dynamics according to the desired velocity.

The simulations were implemented in Julia 1.1.0. All continuous dynamics were discretized and numerically integrated using Heun's method \cite{Stoer:1980aa}, a two-stage Runge-Kutta method, using a step size of $h=$\num{1e-2} \si{s}. All forward integration steps involving rotation matrices were implemented using the exponential map $$\mathbf{R}_{t+h} = \exp\left(h \bm{\omega}_t^\times\right) \mathbf{R}_t,$$ which ensures that $\mathbf{R}$ remains in $SO(3)$ for all time. Since the proposed adaptation laws are designed for continuous-time systems, we modify them by adding a deadband in which no adaptation will occur. For all simulations, we stop adaptation when $\lvert\lvert \mathbf{s} \rvert \rvert_2 \leq 0.01$. This technique is well-studied, and is commonly used to improve the robustness of adaptive controllers to unmodeled dynamics.

The physical parameters of the rigid body are given in Table \ref{tab:params}. The gain matrix $\bm{\Gamma}_o$ was set to \num{0.3}$*\diag(\lvert\mathbf{a}_o\rvert + 0.01)$, where, for a vector $\mathbf{v} \in \mathbb{R}^n$, $\diag(\mathbf{v})$ returns the matrix in $\mathbb{R}^{n \times n}$ with $\mathbf{v}$ on its diagonal, and zeros everywhere else. We place the correct parameters on the gain diagonal to scale the adaptation gains correctly, since the parameters range in magnitude from \num{e-1} to \num{e4}. For $\bm{\Gamma}_r$, we use \num{e-3}$*\mathbf{I}$. Finally, we let $\lambda = 1.5$, and $\mathbf{K}_D = \diag(\mathbf{K}_F, \mathbf{K}_M)$, where $\mathbf{K}_F =$ \num{5e4}$*\mathbf{I}$, and $\mathbf{K}_M =$ \num{5e3}$*\mathbf{I}$. The initial parameter estimates $\widehat{\mathbf{o}}_i$ and $\widehat{\mathbf{r}}_i$ were drawn from zero-mean multivariate Gaussian distributions with variances of $\Sigma_o = \mathbf{I}$ and $\Sigma_r = 2\mathbf{I},$ respectively. Source code of the simulation is publicly available \href{https://github.com/pculbertson/hamilton_ac}{here}.

\begin{figure}[t]
    \centering
    \subfloat[Simulated error signals for task in $SE(3)$.\label{subfig:se3_err}]{
        \includegraphics[width=0.4\textwidth]{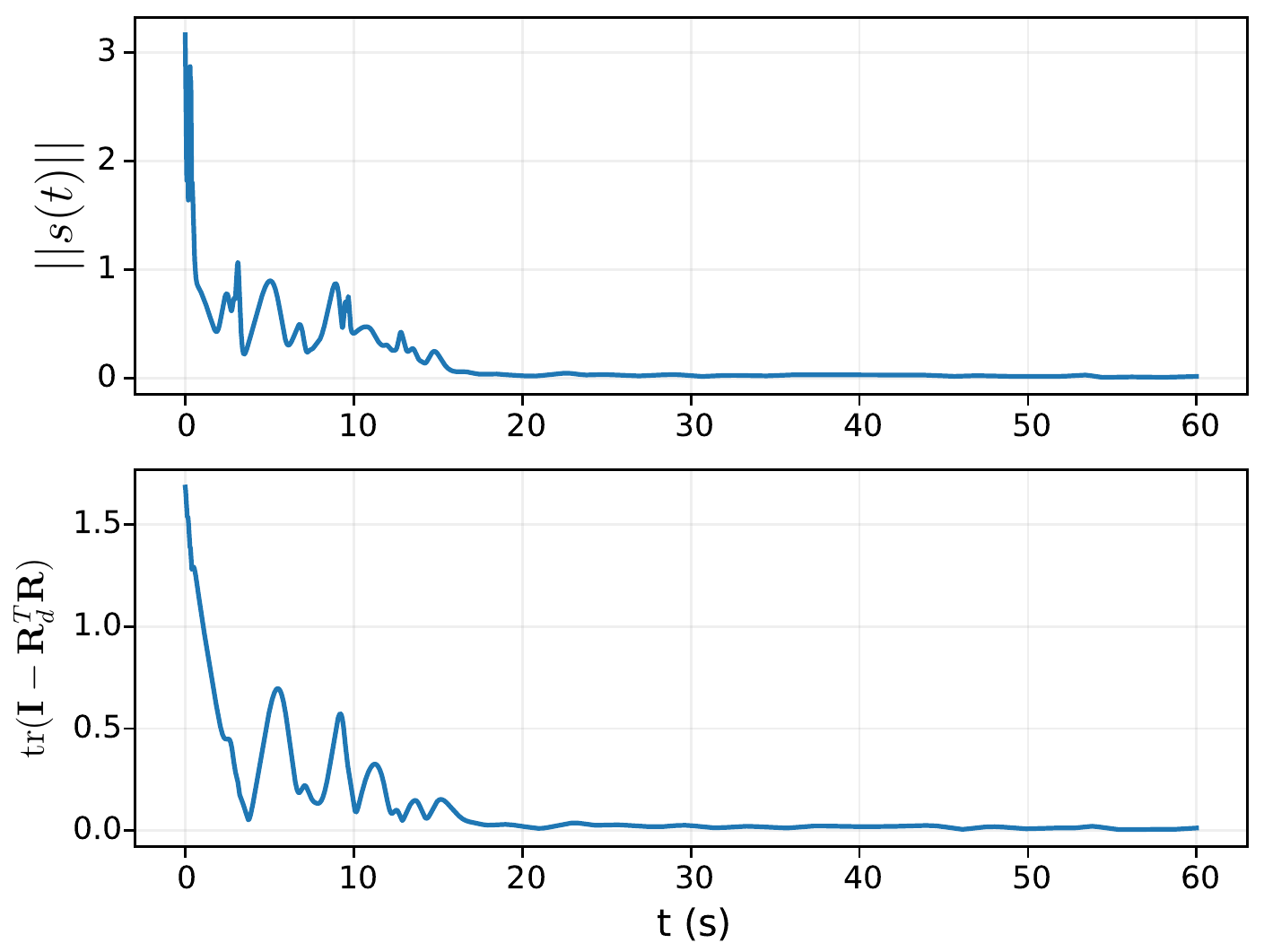}
    }\\
    \subfloat[Simulated Lyapunov-like function $V(t)$ for $SE(3)$ control.\label{subfig:se3_lyap}]{
        \includegraphics[width=0.35\textwidth]{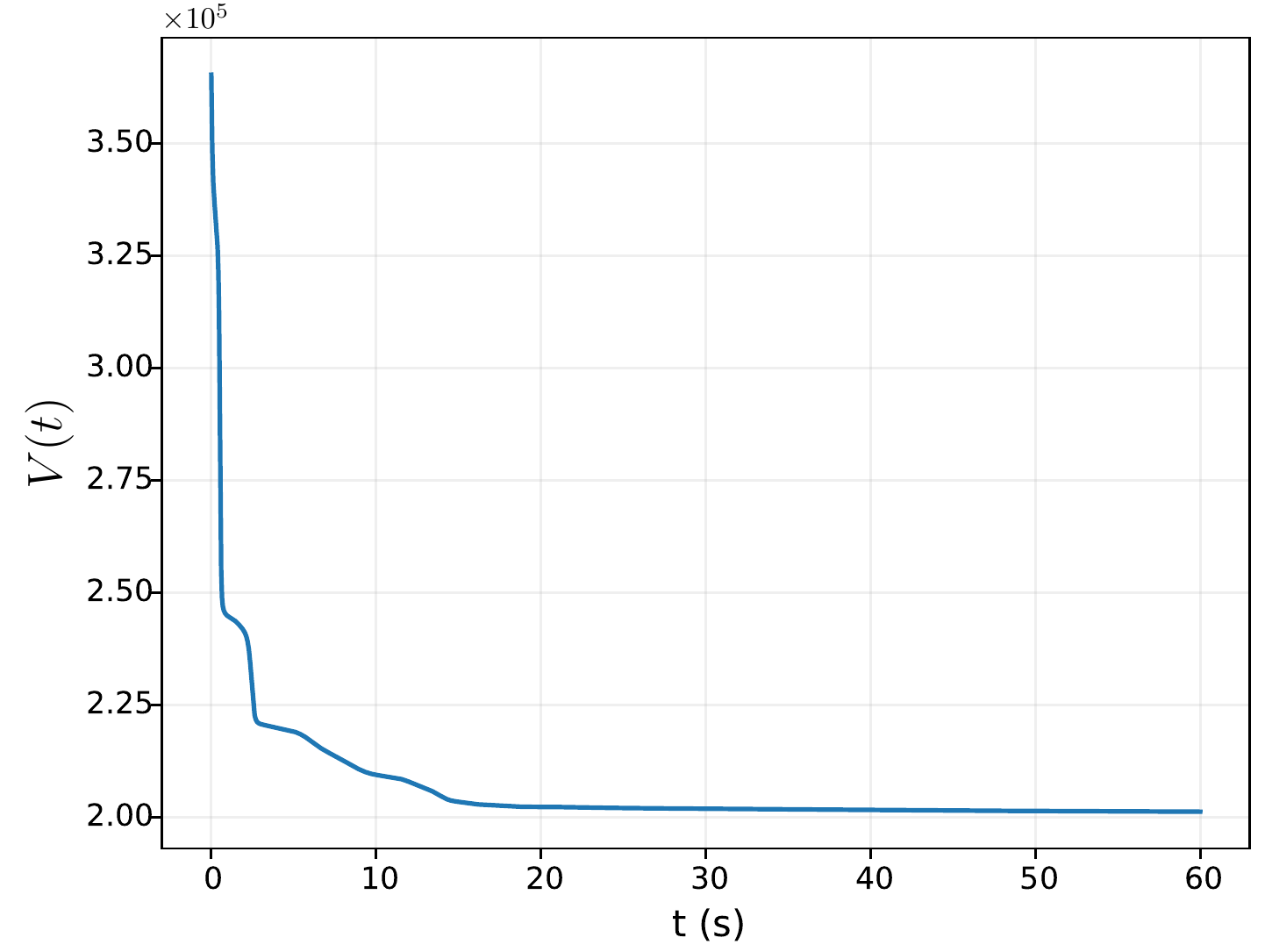}
    }
    \caption{Simulation results for a group of $N = 6$ agents controlling a body in $SE(3)$. \textbf{\textit{(a)}}: Simulated values of the composite error norm $\lvert \lvert \mathbf{s}(t) \rvert \rvert$ and rotation error. We notice both the composite and rotation errors converge to zero. \textbf{\textit{(b)}}: Simulated value of the Lyapunov-like function $V(t)$. We notice it is non-increasing (i.e., $\dot{V}(t) \leq 0$), and converges to a constant, non-zero value, since the parameter errors do not vanish.}
    \label{fig:se(3)}
\end{figure}

\begin{figure}[ht]
    \centering
    \includegraphics[width=0.45\textwidth]{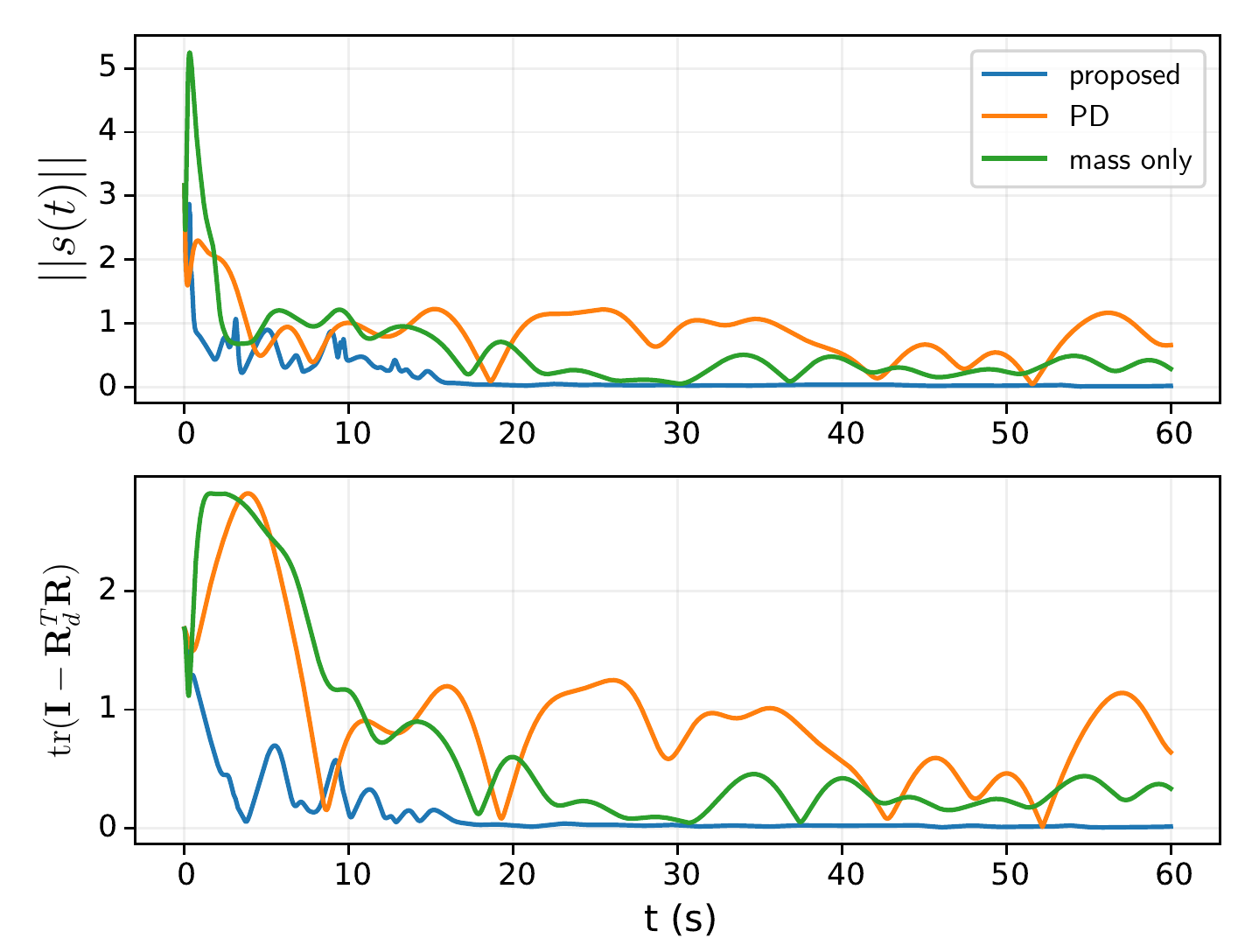}
    \caption{Simulated baseline comparisons for the $SE(3)$ manipulation task. In green, we plot the error signals in the case where the agents estimate and adapt to only the object parameters $\widehat{\mathbf{o}}_i$, as in \cite{Liu:1998aa, Verginis:2017aa}. In this case, neither the composite error $\mathbf{s}$ nor the rotation error converges to zero, even if simulated over a long horizon. The controller may also diverge for some initial conditions. In orange, the agents implement a PD controller by setting their parameter estimates $\widehat{\mathbf{o}}_i, \widehat{\mathbf{r}}_i$ to zero for all time. We note the error does not converge to zero, and tracking performance is generally poor.}
    \label{fig:baselines}
\end{figure}

Figure \ref{fig:se(3)} plots the error signals and Lyapunov-like function value over the course of the simulation. We can see that the composite error quickly converges to zero. We also notice the rotation error converges to zero, implying $\mathbf{R} \rightarrow \mathbf{R}_d$, as expected. We can further see that the Lyapunov-like function $V(t)$ is strictly decreasing (implying $\dot{V}(t) \leq 0$), and converges to a constant, non-zero value. This implies that the parameter errors $\widetilde{\mathbf{o}}_i, \widetilde{\mathbf{r}}_i$ do not converge to zero, which matches intuition due to the lack of a persistently exciting signal, as shown in Section \ref{subsec:pe}.

We further compared the controller's performance against some simple baselines; namely, we studied the controller's performance when agents only adapt to the common object parameters $\mathbf{o}_i$, as studied in \cite{Liu:1998aa, Verginis:2017aa}, and in the case of no adaptation, which reduces to the case of a simple PD controller, since all parameter values are initialized to zero. The error signals for both cases are plotted in Figure \ref{fig:baselines}. In the first case ($\bm{\Gamma}_r = \bm{0}$), we can see that that controller performance greatly suffers due to the inaccurate moment arm estimates. Specifically, in this case, the composite error $\mathbf{s} \not\rightarrow \bm{0}$; even over a very long time horizon, we found the error never vanishes. We also found that simply doubling the variance of the distribution from which the moment arm initializations $\widehat{\mathbf{r}}_i$ were sampled, i.e., sampling $\widehat{\mathbf{r}}_i \sim \mathcal{N}(\bm{0},2\sqrt{2}\mathbf{I})$, would usually cause the controller to diverge. 

This behavior is as expected, since when the moment arm estimates are poor, there is a large, unmodelled coupling between the rotational and translational dynamics which controller cannot compensate. In the second case, we plot the performance which results from only using the PD term in the control law \eqref{eq:ctrl_2}, to demonstrate the value of the proposed adaptation routine. We can clearly see that the PD term alone is unable to send the composite error $\mathbf{s}$ to zero; thus the proposed adaptation is essential to ensure that the tracking error vanishes.

We were also interested in how the controller performs in the case of single-agent failure (i.e., if one or more agents are deactivated at some time $t_d$ during the manipulation task). We note that this is an instantaneous shift in the system parameters (since new $\alpha_i$ must be chosen such that the constants still sum to one). This is, in turn, equivalent to restarting the controller with new system parameters $\mathbf{o}_i$, and a different initial condition; thus the boundedness and convergence results still hold. Figure \ref{fig:drop} plots the system behavior in the case that three agents (half the manipulation team) are deactivated at $t_d= 30$ \si{s}. As expected, when the agents are dropped, we see a discontinuity in the Lyapunov-like function $V(t)$, due to the parameter change, and a slight jump in the composite error $\mathbf{s}$. However, the remaining three agents are still able to regain tracking, as expected.
\begin{figure}
    \centering
    \subfloat[Simulated error signals, $3$ agents shut off at $t=30$.\label{subfig:drop_err}]{
        \includegraphics[width=0.42\textwidth]{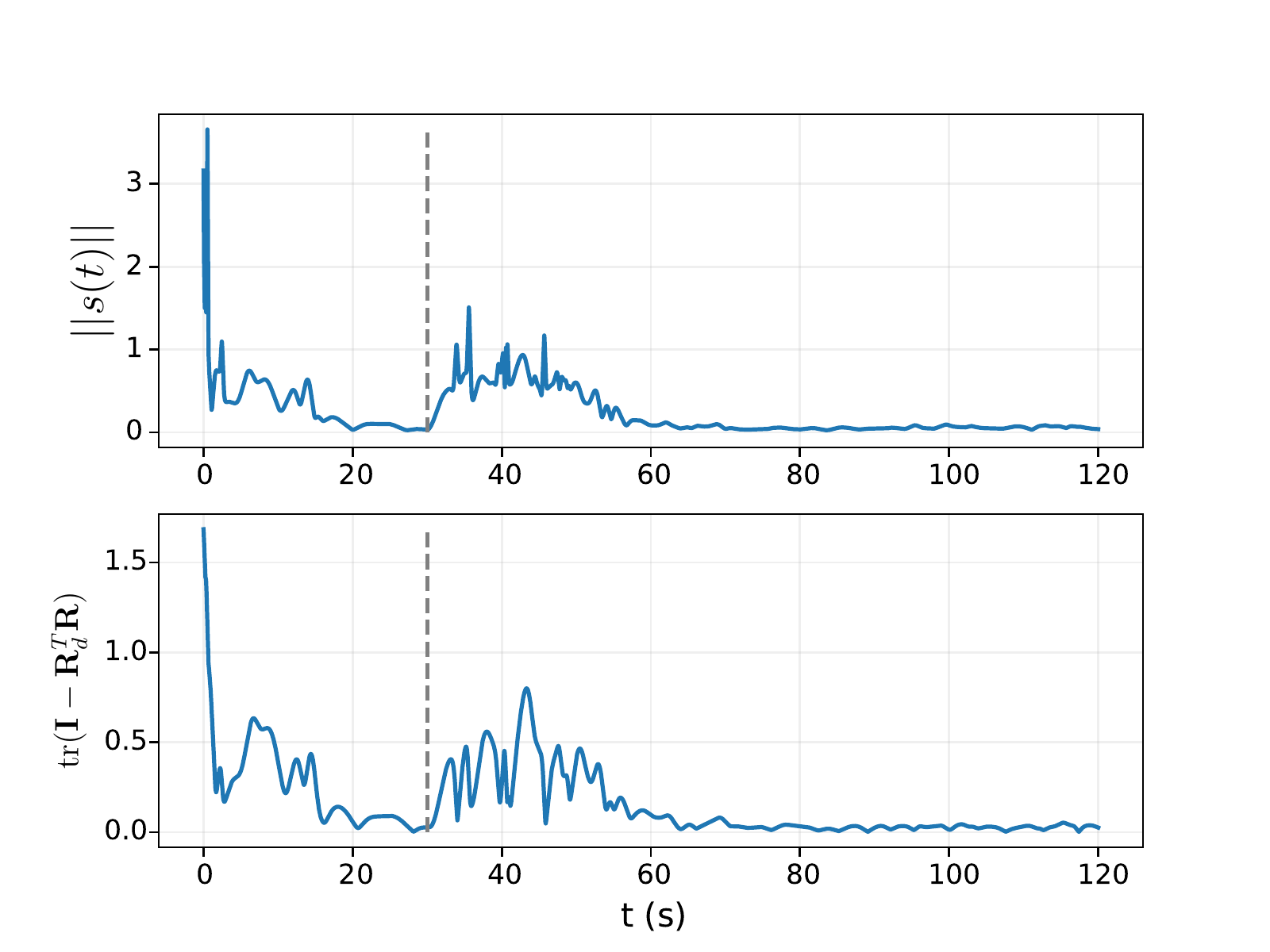}
    }\\
    \subfloat[Simulated Lyapunov-like function during agent shutoff.\label{subfig:drop_lyap}]{
        \includegraphics[width=0.37\textwidth]{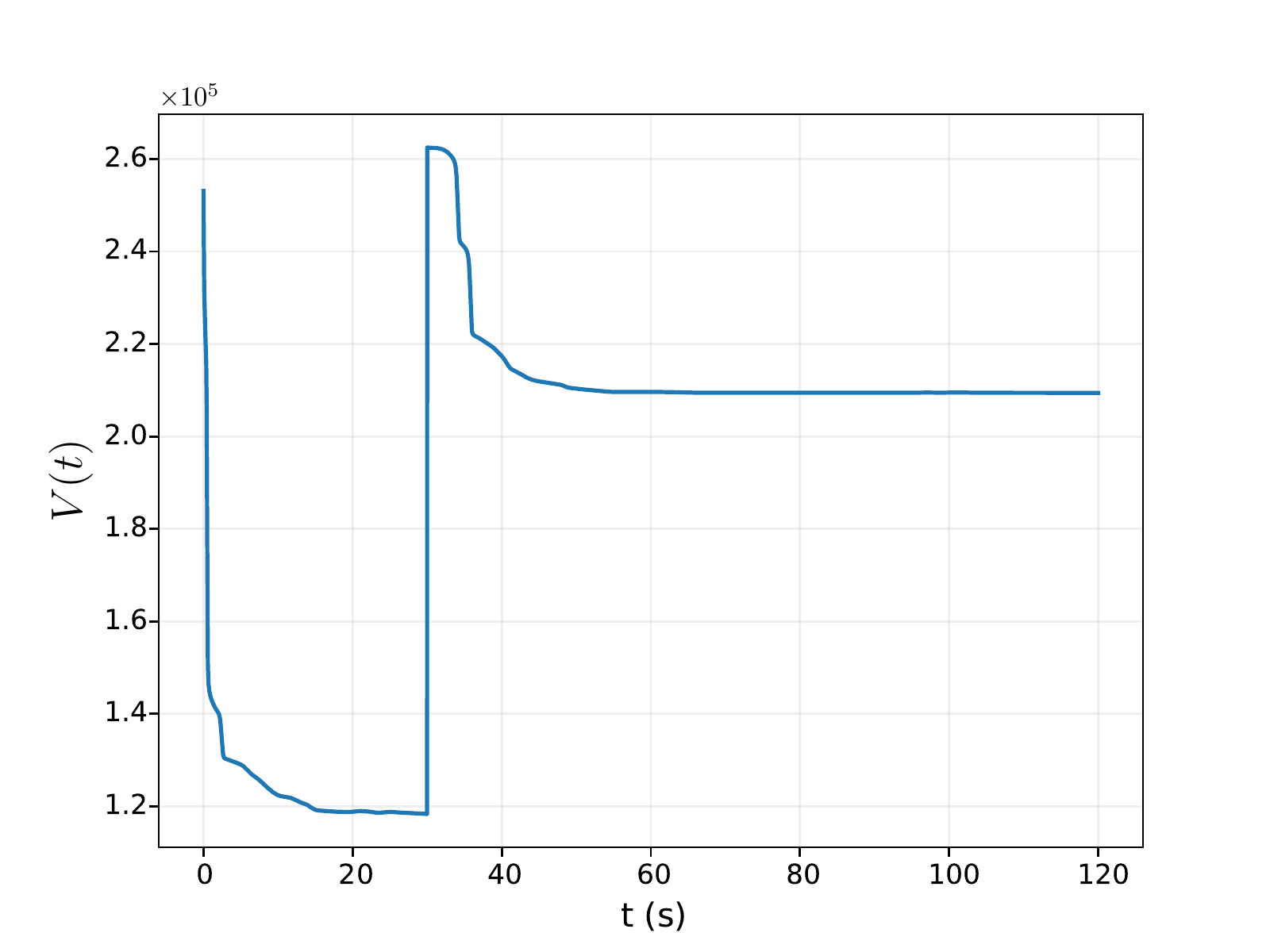}
    }
    \caption{Simulation results for $SE(3)$ task, where 3 of the $N=6$ agents are deactivated at time $t=30$. \textbf{\textit{(a)}}: Simulated values of the composite error norm $\lvert \lvert \mathbf{s}(t) \rvert \rvert$ and rotation error. The ``drop time'' is shown with a dashed grey line. After the drop, the remaining agents return to tracking the reference trajectory. \textbf{\textit{(b)}}: Simulated value of the Lyapunov-like function $V(t)$. We note the function is non-increasing, except for the discontinuity at $t=30$, which occurs due to the instantaneous switch of parameter values.}
    \label{fig:drop}
\end{figure}

\begin{figure}[ht]
    \centering
    
        \includegraphics[width=0.4\textwidth]{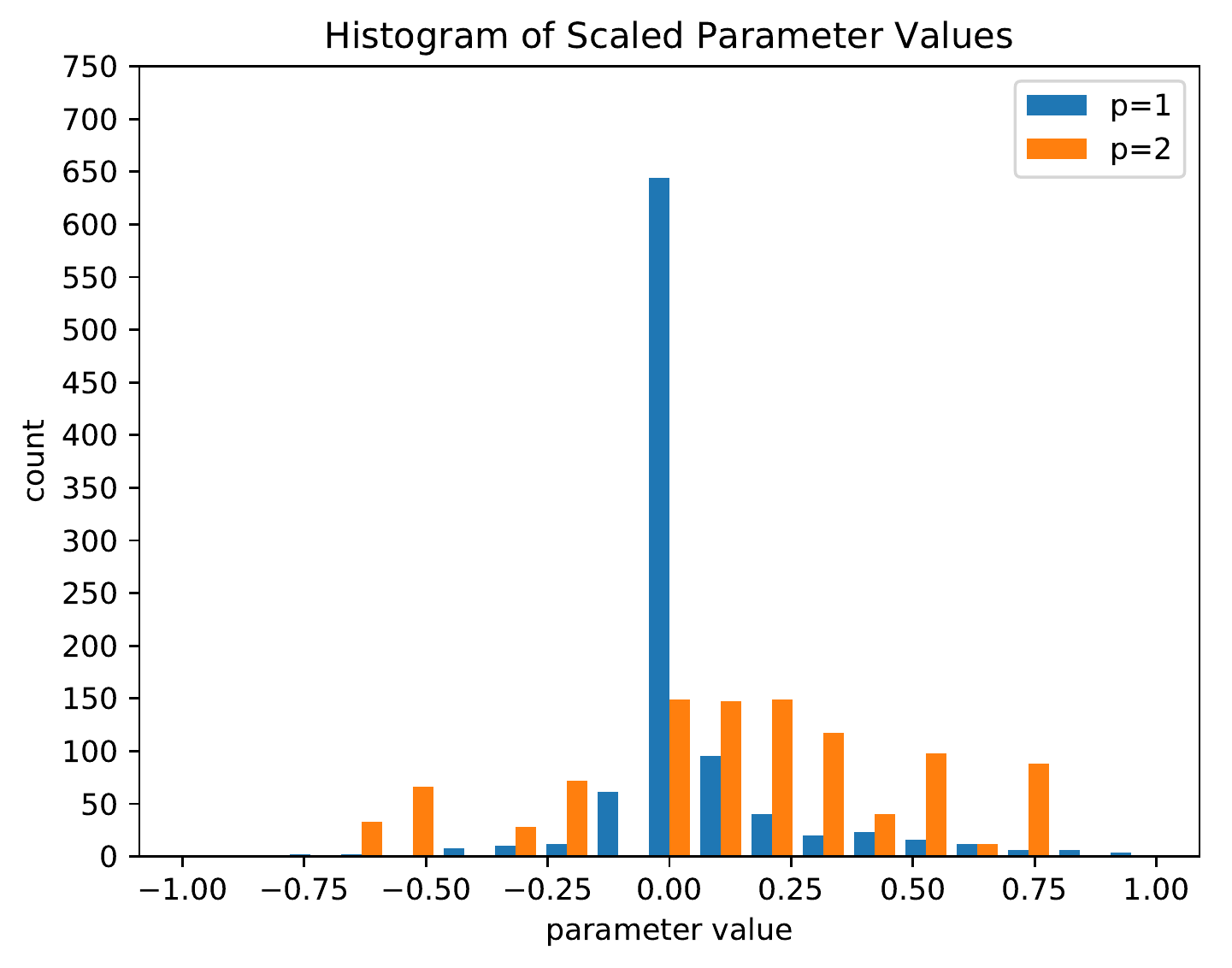}
    \caption{Histogram of estimated parameter values for $\ell_1$ and $\ell_2$ regularization. When using the $\ell_1$ norm (blue), we see the resulting parameter estimate is quite sparse, with a large spike centered at 0. When using the $\ell_2$ norm (orange), we instead see the distribution of parameter estimates is less peaked, but takes on fewer large values.}
    \label{fig:param_hist}
\end{figure}

\begin{figure*}[t]
    \centering
    \subfloat{
        \includegraphics[width=0.85\textwidth]{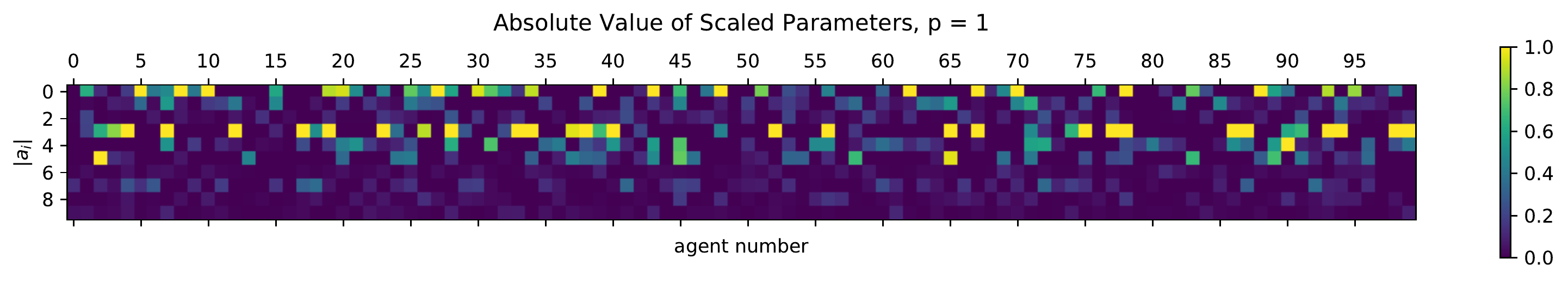}
    }\\
    \subfloat[]{
        \includegraphics[width=0.85\textwidth]{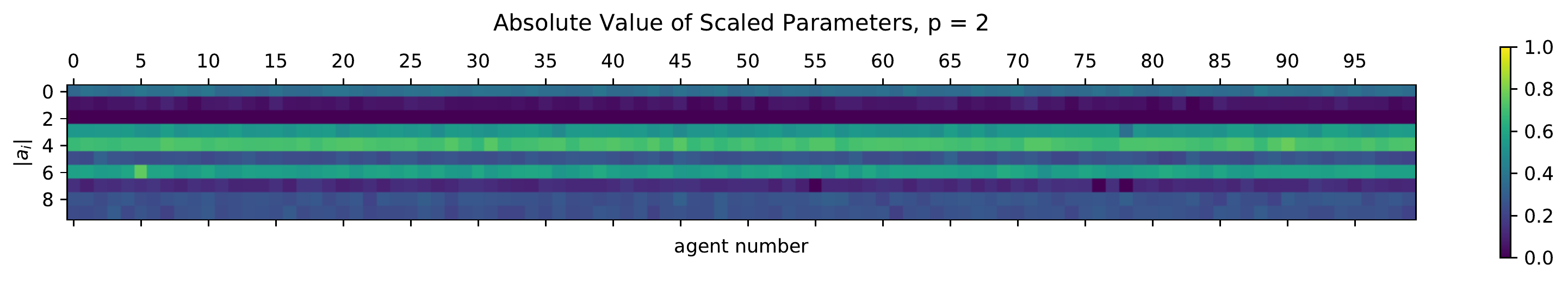}
    }
    \caption{Visualization of estimated parameters for each regularization term. $\textbf{Top:}$ Visualization of the estimated parameters when using the Bregman divergence for the $\ell_1$ norm. In this case, we can see this regularization does indeed promote sparsity, with many entries being equal or close to zero, with a few outliers taking on larger absolute values. $\textbf{Bottom:}$ Visualization of the estimated parameters when using the $\ell_2$ norm. In this case, the parameters take on much smaller values, but are roughly equal across all agents.}
    \label{fig:param_vis}
    \vspace{-1.5em}
\end{figure*}

Finally, as shown recently in \cite{Lee:2018ab, Wensing:2020}, the parameter error terms in the Lyapunov-like function \eqref{eq:lyap}, $\widetilde{\mathbf{a}}_i^T \bm{\Gamma}_i^{-1} \widetilde{\mathbf{a}}_i$ can be generalized by instead using the Bregman divergence $d_\psi(\bm{\Gamma}_i\mathbf{a}_i \parallel \bm{\Gamma}_i \widehat{\mathbf{a}}_i),$ where, for some strictly convex function $\psi,$ we write the Bregman divergence as $$d_\psi(\mathbf{x} \parallel \mathbf{y}) = \psi(\mathbf{y}) - \psi(\mathbf{x}) - (\mathbf{y}-\mathbf{x})^T \nabla \psi(\mathbf{x}).$$ If we use the adaptation law $$\dot{\widehat{\mathbf{a}}}_i = -\bm{\Gamma}_i^{-1} (\nabla^2 \psi(\bm{\Gamma}_i \hat{\mathbf{a}}))^{-1} \bm{\Gamma}_i^{-1} \mathbf{Y}_i^T \mathbf{s},$$ we can easily reproduce the boundedness and convergence results shown previously. In fact, one can show the usual quadratic penalty is simply a Bregman divergence for $\psi(\mathbf{x}) = \frac{1}{2}\lvert\lvert \mathbf{\Gamma}^{-\frac{1}{2}}\mathbf{x} \rvert \rvert_2^2.$


As a cursory study of this behavior, we substituted the usual quadratic penalty for a Bregman divergence with $\psi(\mathbf{x}) = \lvert \lvert \mathbf{x} \rvert \rvert_1$, the $\ell_1$ norm. In \cite{Boffi:2021}, the authors show the $\ell_1$ norm leads to sparse solutions, meaning the parameter estimates converge to vectors which can achieve tracking with few non-zero values. 


For this experiment, we increased the number of agents to $N=100$ to exaggerate the overparameterization of the system. While the tracking performance and transient error were quite similar for both regularization strategies, they give rise to quite different estimates of the object parameters. Figure \ref{fig:param_vis} plots histograms of the estimated parameters under both the $\ell_1$ norm and $\ell_2$ norm divergences. Here the parameters are scaled so all values are on the same order. We can indeed see the $\ell_1$ solution is much sparser, with many values equal to zero, and with more outliers than the $\ell_2$ case.

Figure \ref{fig:param_vis} shows a visualization of the estimated parameters. We can again see that the $\ell_1$ solution is quite sparse, with many entries equal to zero, in addition to the few outliers which are have relatively large values. In contrast, the $\ell_2$ solution shows a ``banded'' structure that implies agreement between agents, where all agents arrive at parameter estimates that are somewhat small in magnitude, and roughly equal across agents. 

Again, while these experiments are quite cursory, they demonstrate that, due to the overparameterization of the system, we can design various forms of regularization to express preferences about which parameter estimates, among those that achieve tracking, are desirable. The question of how various regularization terms affect the system's robustness to noise, or the overall control effort, could prove interesting directions for future research.

\section{Experimental Results}
We further studied the performance of the proposed controller experimentally, using a group of $N = 6$ ground robots performing a manipulation task in the plane. Figure \ref{fig:ouijabot} shows one of the robots used in this experiment, a Ouijabot \cite{Wang:2018ab}, which is a holonomic ground robot designed specifically for collaborative manipulation. The robots were equipped with current sensors on each motor, which allows them to estimate and control the wrench they are applying to a payload. Specifically, the robots used a low-level PID controller to generate motor torques, based on the desired wrench set by the higher-level adaptive controller, with feedback from the current sensors. They are also equipped with a Raspberry Pi 3B+, which allows them to perform onboard computation, as well as receive state measurements via ROS. The robots did not use any other communication capabilities.

\begin{figure}
    \centering
    \includegraphics[width=0.4\textwidth]{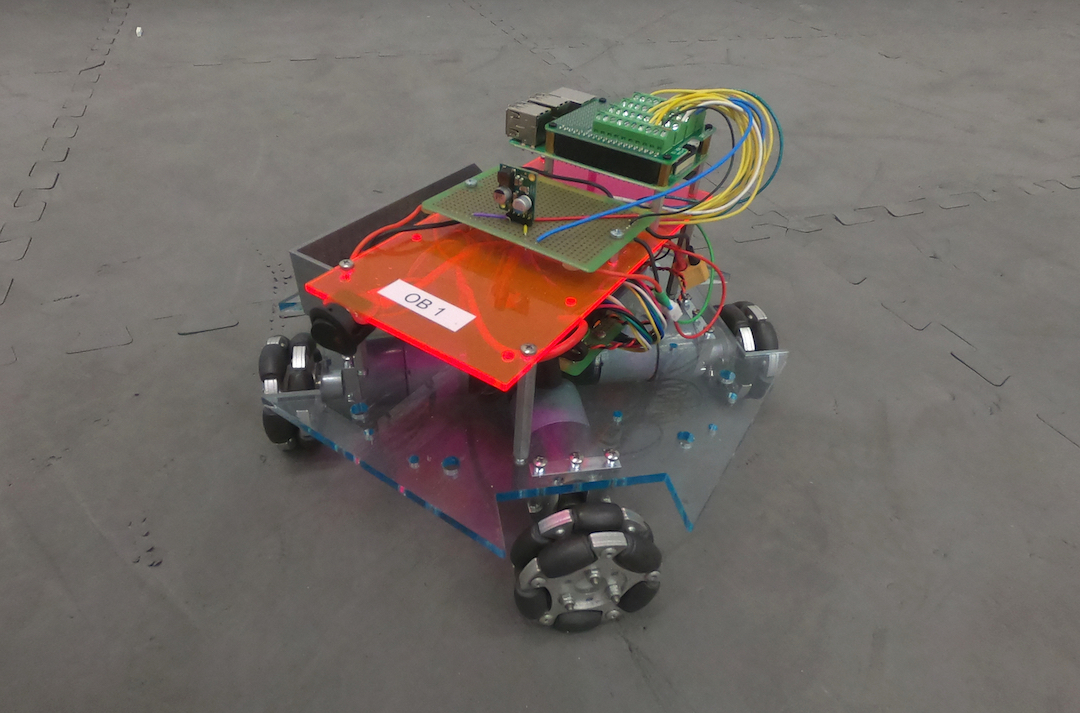}
    \caption{Photo of a Ouijabot, the holonomic ground robot used to conduct the experiments presented in this work. The robot is equipped with current sensors on its motors, which allows it to close a feedback loop to provide a desired force and torque to an external payload.}
    \label{fig:ouijabot}
\end{figure}

Figure \ref{fig:payload} shows the payload which the robots manipulated, which was an asymmetric wood piece that weighed roughly $5.5$ \si{kg}, which was too heavy for any individual agent to move. The robots were rigidly attached to the piece, which remained roughly 4 \si{cm} from the ground. As seen in the figure, the measurement point $\mathbf{r}_p$ coincided with one of the robots, and was located quite far from the center of mass, resulting in long moment arms $\mathbf{r}_i$ as well as non-negligible coupling between the rotational and translational dynamics. The robots received position measurements from this point using Optitrack, a motion capture system; the velocity measurements were computed by numerically differentiating the position measurements. All signals were passed through a first-order filter to reduce the effects of measurement noise.

\begin{figure}
    \centering
    \includegraphics[width=0.4\textwidth]{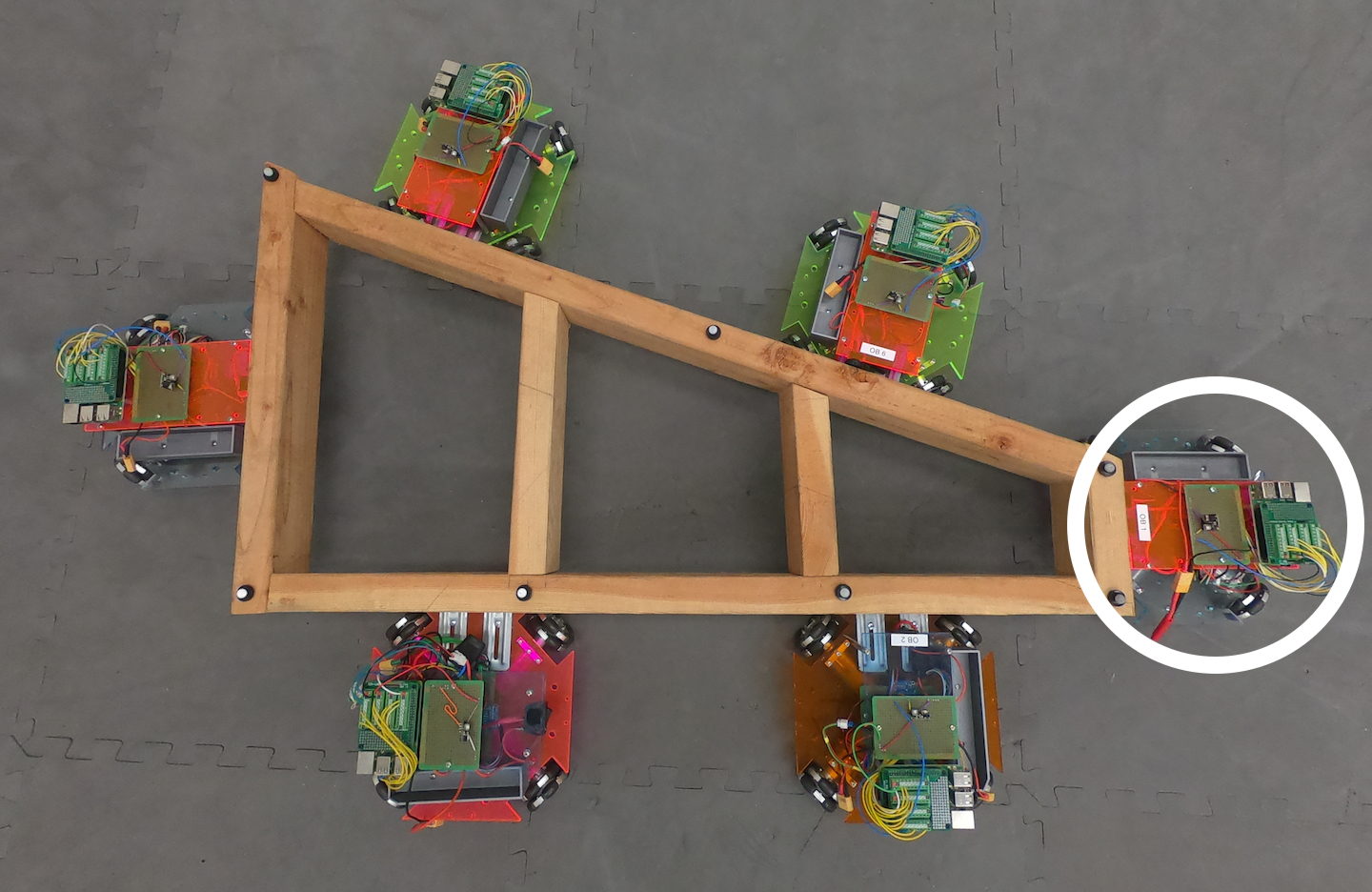}
    \caption{Experimental setup for planar manipulation task. A team of $N = 6$ ground robots manipulate a shared wood payload. The measurement point for the payload coincides with one of the agents, circled in white, and is located far from the geometric center of the body, and its center of mass.}
    \label{fig:payload}
\end{figure}

\begin{figure*}[ht]
    \centering
    \hfill\subfloat[Composite error norm, $\lvert\lvert \mathbf{s} \rvert\rvert$, averaged over 25 trials.\label{subfig:exp_err}]{
        \includegraphics[width=0.45\textwidth]{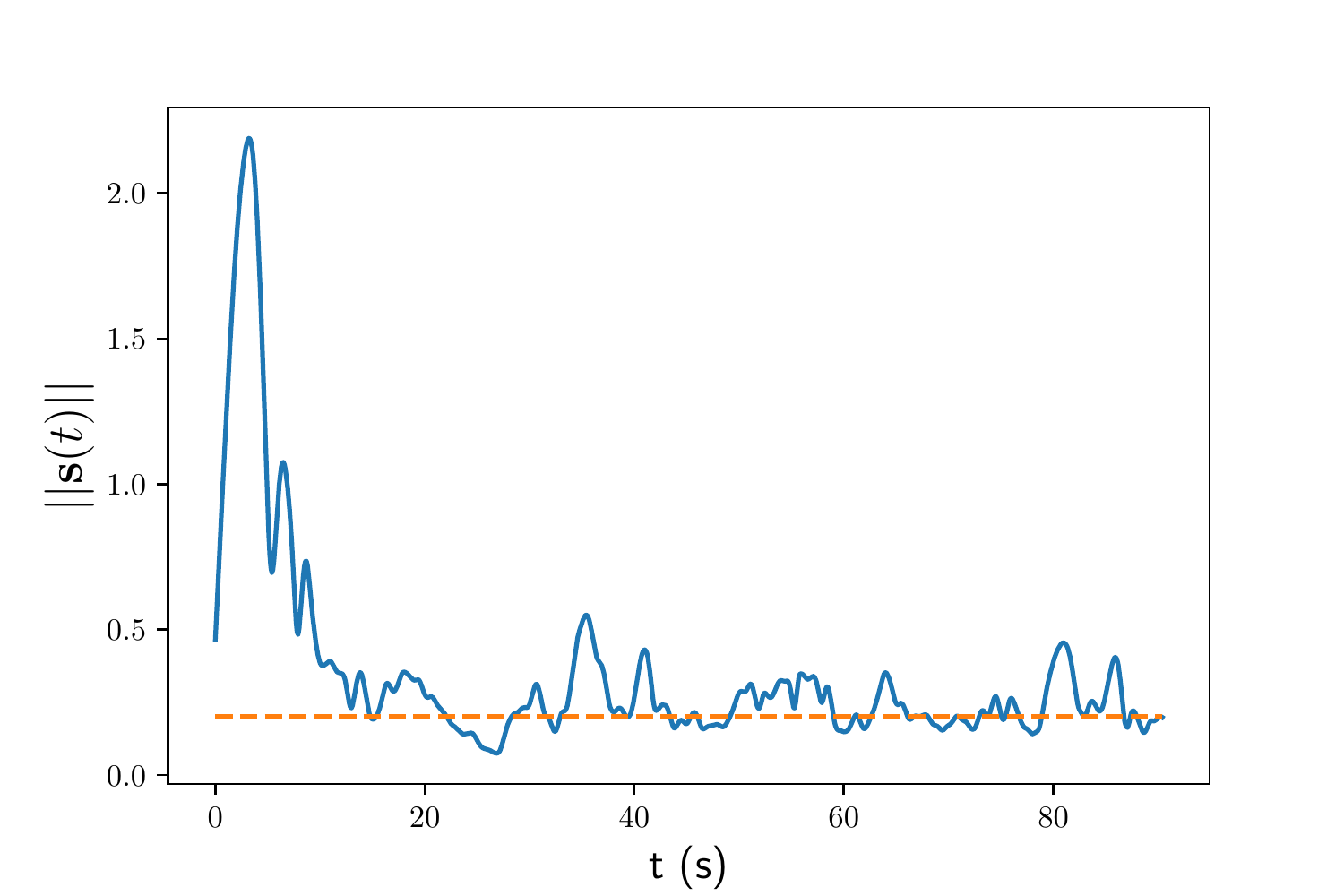}
    }
    \hfill
    \subfloat[Composite error norm, plotted with PD baseline.\label{subfig:exp_baseline}]{
        \includegraphics[width=0.45\textwidth]{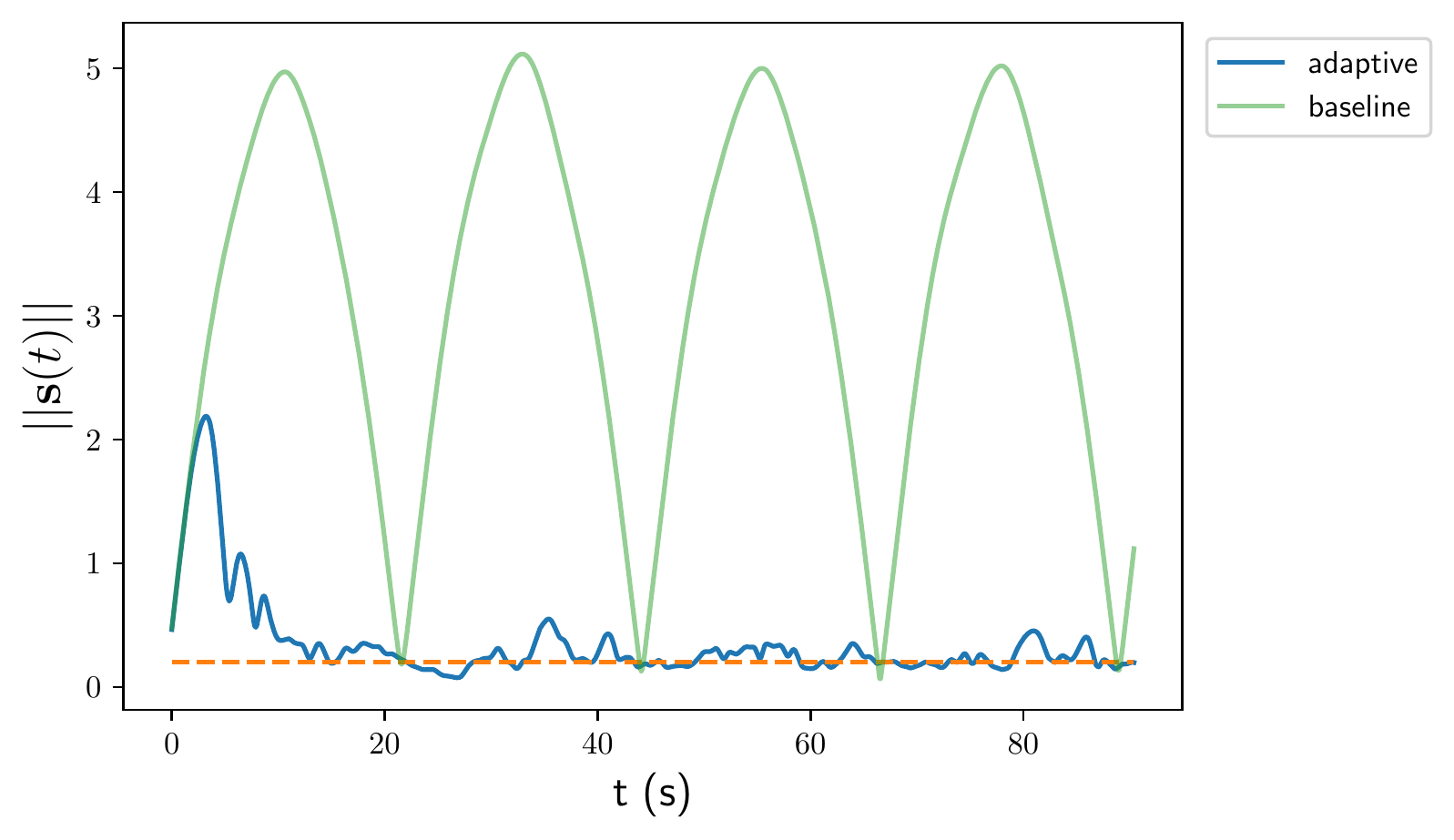}
    }
    \hfill
    \\
    \hfill
    \subfloat[Ground tracks for all trials.\label{subfig:all_tracks}]{
        \includegraphics[width=0.45\textwidth]{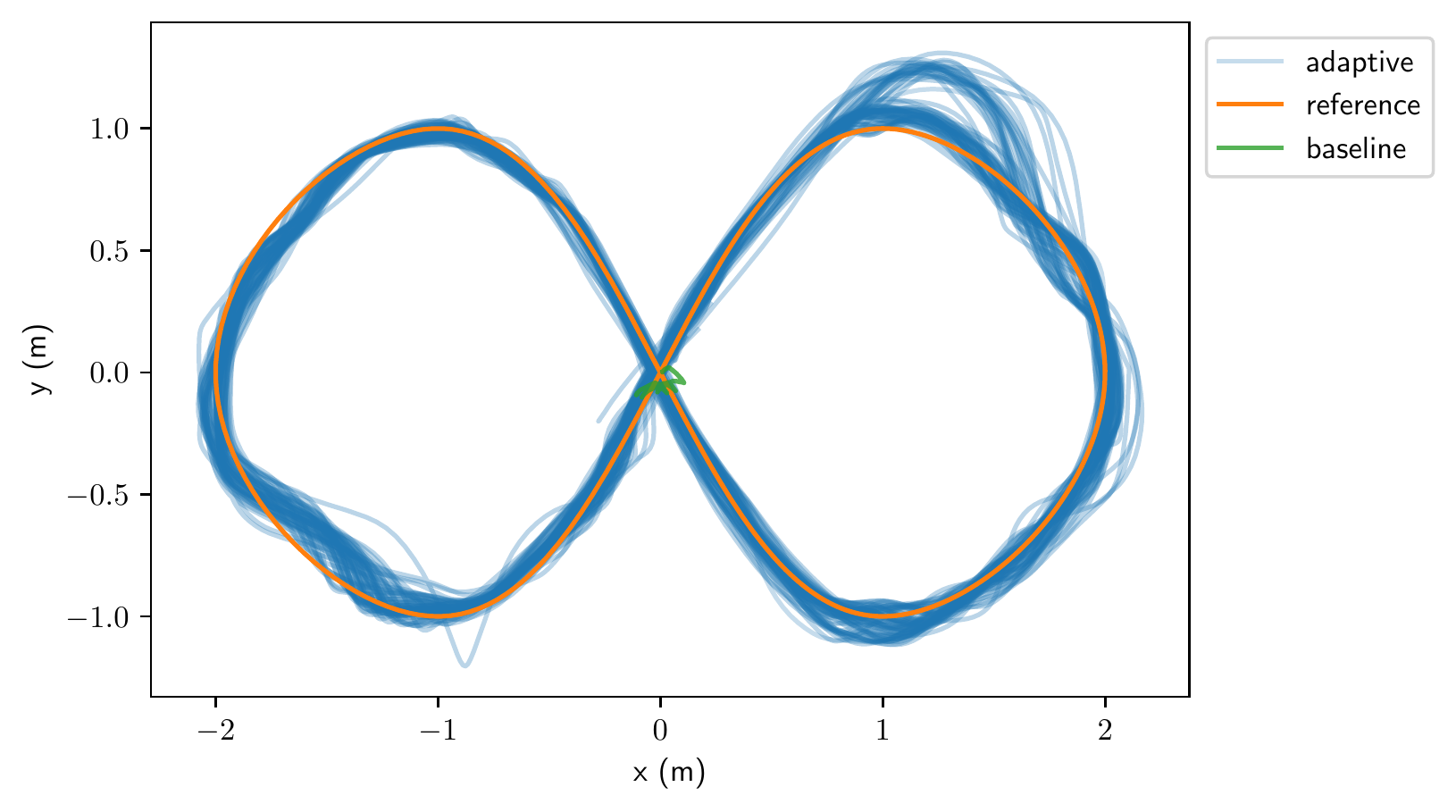}
    }
    \hfill
    \subfloat[Average payload heading, over 25 trials.\label{subfig:heading}]{
        \includegraphics[width=0.45\textwidth]{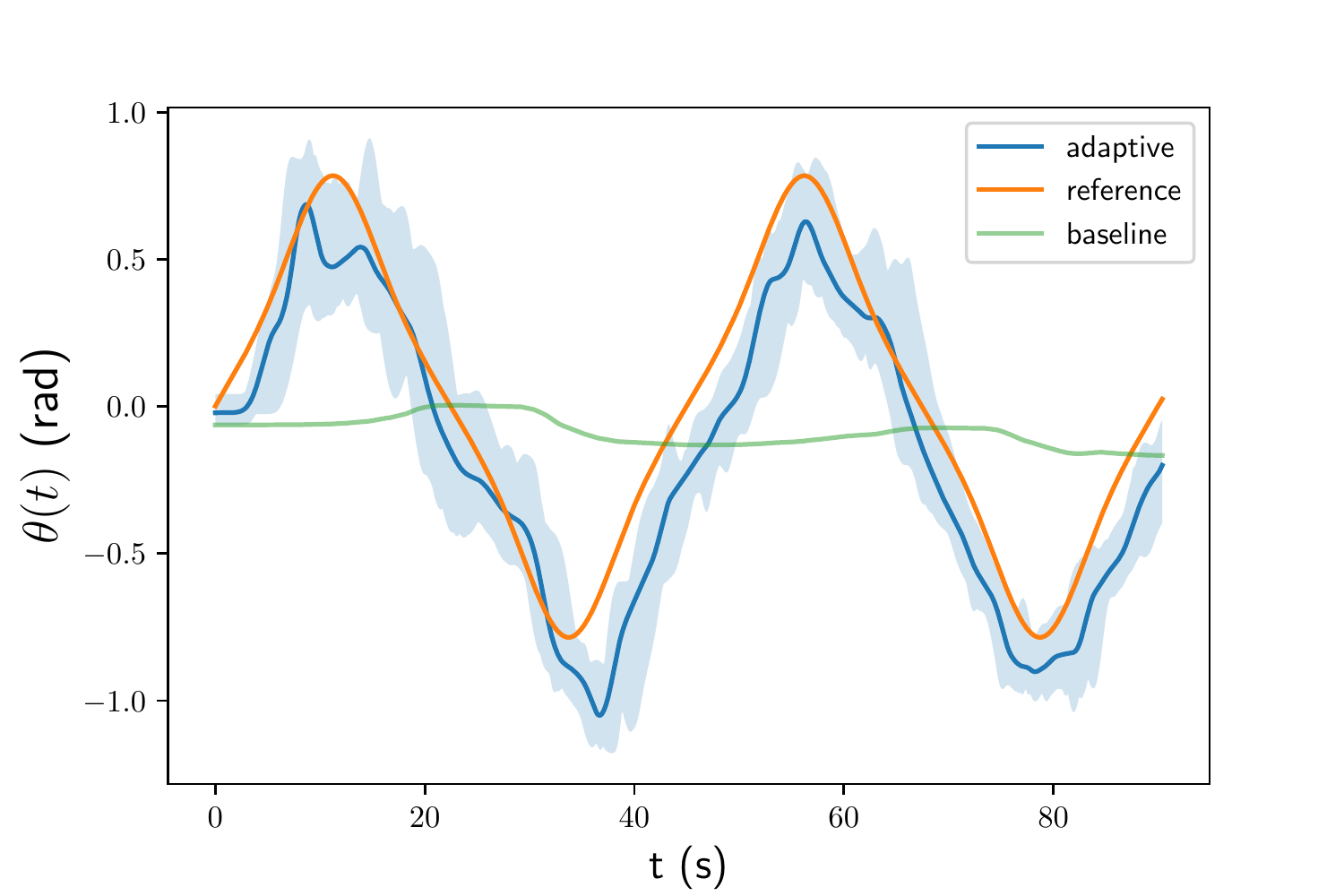}
    }
    \hfill
    \caption{Experimental results for a group of $N = 6$ ground robots performing a planar manipulation task. \textbf{\textit{(a)}}: Measured value of the composite error, averaged across 25 trials. \textbf{\textit{(b)}} Composite error, plotted with the values measured when only using the PD term of the control law. \textbf{\textit{(c)}}: Overhead view of the measurement point trajectory for all trials. \textbf{\textit{(d)}}: Plot of average payload angle (blue), with shading between maximum and minimum values across all trials. Plotted with reference trajectory (orange), and PD baseline response (green).}
    \label{fig:experiment}
    \vspace{-1em}
\end{figure*}

Figure \ref{fig:experiment} plots the results of the experimental study. The desired trajectory $\mathbf{q}_d$ of the measurement point corresponded to a ``figure-8'' in the plane, with the desired rotation $\theta_d$ oscillating between $-\frac{\pi}{4}$ and $\frac{\pi}{4}$ at successive waypoints. The experiment included 25 trials of the manipulation task, with all initial parameter estimates set to zero. Figure \ref{subfig:exp_err} plots the composite error norm, averaged over the 25 trials, with the deadband of $0.2$, under which adaptation was stopped, plotted as a dotted line. We can see that the controller converges to be within the deadband in about 45 seconds. Figure \ref{subfig:all_tracks} shows the ground track of all trials, with the reference shown in orange, and baseline (PD) response shown in green. We note the agents typically overshoot the first turn (top right), and then converge to reasonable tracking, although it proved difficult to achieve perfect tracking through all turns. Figure \ref{subfig:heading} plots the heading of the body, averaged over all trials; shaded is the maximum and minimum heading value at each time. We again note the  controller achieves reasonable tracking after a short transient, which improves over the trial; the PD controller alone gave nearly no response.

We note this experiment introduced considerable sources of modelling error which were not parameterized by the adaptive controller. Specifically, the wrenches computed by the high-level controller were only setpoints for a low-level PID controller. Further, while the control law included terms for both viscous and Coulomb friction, as outlined in Section \ref{subsec:friction}, this of course did not capture the complex frictional relationship between the robot wheels and foam floor of the experimental space, which depended on wheel speed, direction, etc. Finally, all control signals were subject to time delays, discretization error, and measurement noise. We found our controllers (which were implemented as ROS nodes) ran at a frequency of 20 \si{Hz}.

\section{Concluding Remarks}
This work has focused on the problem of collaborative manipulation without knowledge of payload parameters or centralized control. To this end, we proposed a feedback control architecture on $SE(3),$ together with a distributed adaptation law, which enabled the agents to gradually adapt to the unknown object parameters to track a desired trajectory. We further provided a Lyapunov-like proof of the asymptotic convergence of the system to the reference trajectory under our proposed controller, and studied its performance for practical manipulation tasks using both numerical simulations and hardware experiments.

There exist numerous directions for future work. An important open question is how to incorporate actuation constraints, such as saturation, into the proposed control framework. In fact, tight actuation constraints typically are a strong motivation for collaborative manipulation, but in this work we assume the agents may apply any desired force and torque to the object. This is because, in general, it is quite difficult to incorporate saturation into adaptive control design. While our hardware experiments demonstrate that the proposed controller can achieve good performance despite these unmodelled actuation constraints, handling these theoretically remains an important open problem.

Another interesting direction is to understand more clearly the connections between this work, and other distributed control architectures such as quorum sensing or the control of networked systems. As mentioned previously, studying the effect of different regularization strategies, or of using higher-order adaptation laws, on the overall system performance could also prove valuable. Further, while the proposed controller is robust to slow changes in the object's mass and geometric properties, another interesting direction would be to study how and if the proposed architecture can be extended to the manipulation of articulated or flexible bodies. Finally, while we assume the agents travel through free space, we are quite interested in enabling distributed manipulation in the presence of obstacles, and studying how agents can coordinate themselves through object motion to avoid obstacles they cannot directly observe.

\bibliographystyle{unsrt} 
\balance
\bibliography{main}

\appendix
\section{Proofs of Matrix Properties}   
\label{sec:matrix_properties}
Here we demonstrate some properties of $\mathbf{H}, \mathbf{C}$, and their derivatives, which we use to prove Theorem \ref{thm:controller}. These properties are well-known for general Lagrangian equations of motion \cite[Chapter~9]{Slotine:1991aa}; here we prove them for our choices of $\mathbf{H}, \mathbf{C}.$


\begin{lemma}
$\mathbf{H}(\mathbf{q}),$ as given in \eqref{eq:H_SO(3)}, is positive definite.
\end{lemma}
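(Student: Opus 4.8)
The plan is to recognize $\mathbf{H}(\mathbf{q})$ as a congruence transform of a manifestly positive definite block-diagonal matrix --- the system inertia matrix referred to the center of mass --- so that positive definiteness is inherited for free. First I would note that $\mathbf{H}(\mathbf{q})$ is genuinely symmetric: since $(\mathbf{R}\mathbf{r}_p)^\times \in \mathfrak{so}(3)$ we have $\big[(\mathbf{R}\mathbf{r}_p)^\times\big]^T = -(\mathbf{R}\mathbf{r}_p)^\times$, so the $(2,1)$ block is exactly the transpose of the $(1,2)$ block, while $m\mathbf{I}$ and $\mathbf{R}\mathbf{J}_p\mathbf{R}^T$ are symmetric; thus it is meaningful to ask whether $\mathbf{H}(\mathbf{q})$ is positive definite.

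Next I would introduce the (invertible) twist-transfer map
$$\mathbf{T}(\mathbf{q}) = \left[\begin{array}{cc} \mathbf{I} & (\mathbf{R}\mathbf{r}_p)^\times \\ \bm{0}_3 & \mathbf{I}\end{array}\right],$$
obtained by rearranging $\mathbf{v}_p = \mathbf{v}_{cm} + \bm{\omega}\times\mathbf{R}\mathbf{r}_p$ into $\big[\mathbf{v}_{cm}^T, \bm{\omega}^T\big]^T = \mathbf{T}(\mathbf{q})\,\dot{\mathbf{q}}$ (using $\bm{\omega}\times\mathbf{R}\mathbf{r}_p = -(\mathbf{R}\mathbf{r}_p)^\times\bm{\omega}$), together with the block-diagonal matrix $\mathbf{H}_{cm}(\mathbf{q}) = \diag(m\mathbf{I},\ \mathbf{R}\mathbf{J}_{cm}\mathbf{R}^T)$. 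The one real computation is to verify
$$\mathbf{H}(\mathbf{q}) = \mathbf{T}(\mathbf{q})^T \mathbf{H}_{cm}(\mathbf{q})\, \mathbf{T}(\mathbf{q}).$$
Carrying out the block product, the off-diagonal blocks fall out immediately as $\pm m(\mathbf{R}\mathbf{r}_p)^\times$, and the $(2,2)$ block becomes $-m\big[(\mathbf{R}\mathbf{r}_p)^\times\big]^2 + \mathbf{R}\mathbf{J}_{cm}\mathbf{R}^T$; applying the identity $-[\mathbf{a}^\times]^2 = (\mathbf{a}^T\mathbf{a})\mathbf{I} - \mathbf{a}\mathbf{a}^T$ with $\mathbf{a} = \mathbf{R}\mathbf{r}_p$ (so $\mathbf{a}^T\mathbf{a} = \mathbf{r}_p^T\mathbf{r}_p$ and $\mathbf{a}\mathbf{a}^T = \mathbf{R}\mathbf{r}_p\mathbf{r}_p^T\mathbf{R}^T$) collapses this to $\mathbf{R}\big(\mathbf{J}_{cm} + m((\mathbf{r}_p^T\mathbf{r}_p)\mathbf{I} - \mathbf{r}_p\mathbf{r}_p^T)\big)\mathbf{R}^T = \mathbf{R}\mathbf{J}_p\mathbf{R}^T$, which is precisely the parallel-axis expression for $\mathbf{J}_p$ stated in the text.

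Finally I would close by congruence: $\mathbf{T}(\mathbf{q})$ is unit upper-triangular, hence invertible, and $\mathbf{H}_{cm}(\mathbf{q})$ is positive definite because $m>0$ and $\mathbf{R}\mathbf{J}_{cm}\mathbf{R}^T$ is an orthogonal congruence of the center-of-mass inertia tensor $\mathbf{J}_{cm}$, which is positive definite for a genuine rigid body; since congruence by an invertible matrix preserves positive definiteness, $\mathbf{H}(\mathbf{q}) = \mathbf{T}^T\mathbf{H}_{cm}\mathbf{T}$ is positive definite for every $\mathbf{q}$. I do not expect a genuine obstacle here --- the work is just careful bookkeeping of the block multiplication and the double-cross-product identity --- but the one point worth flagging is the standing assumption that $\mathbf{J}_{cm}$ is positive definite (valid for any non-degenerate three-dimensional payload; the planar case is identical with the scalar moment of inertia in place of the tensor). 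An equivalent coordinate-level route, which makes strict positivity transparent, is to complete the square in $\mathbf{v}_1$ inside $\mathbf{v}^T\mathbf{H}\mathbf{v}$ for $\mathbf{v} = [\mathbf{v}_1^T,\mathbf{v}_2^T]^T$, obtaining $m\|\mathbf{v}_1 + (\mathbf{R}\mathbf{r}_p)^\times\mathbf{v}_2\|^2 + \mathbf{v}_2^T\mathbf{R}\mathbf{J}_{cm}\mathbf{R}^T\mathbf{v}_2$, which vanishes only when $\mathbf{v}_2 = \bm{0}$ and then $\mathbf{v}_1 = \bm{0}$.
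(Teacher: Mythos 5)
Your proof is correct and is essentially the paper's argument in different packaging: the paper invokes the Schur-complement criterion for block positive definiteness, whose underlying block-$LDL^T$ factorization is exactly your congruence $\mathbf{H} = \mathbf{T}^T\mathbf{H}_{cm}\mathbf{T}$, and both reduce to showing the same Schur complement $\mathbf{R}\mathbf{J}_{cm}\mathbf{R}^T \succ 0$ via the identity $\mathbf{v}^\times\mathbf{v}^\times = \mathbf{v}\mathbf{v}^T - (\mathbf{v}^T\mathbf{v})\mathbf{I}$ and the parallel-axis relation. The only substantive difference is that you make the physical meaning of the factorization (the twist-transfer map to the center of mass) explicit where the paper leaves it implicit.
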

\begin{proof}
For a block matrix of the form
\begin{equation*}
    \mathbf{M} = \left[\begin{array}{cc}\mathbf{A} & \mathbf{B} \\ \mathbf{B}^T & \mathbf{C} \end{array} \right],
\end{equation*}
using the properties of the Schur complement \cite{Boyd:2004aa}, we can say $\mathbf{M} \succ 0$ if and only if $\mathbf{A} \succ 0$  and $\mathcal{S} := \mathbf{C} - \mathbf{B}^T \mathbf{A}^{-1} \mathbf{B} \succ 0.$ Thus, since $m\mathbf{I} \succ 0$ by inspection, $\mathbf{H} \succ 0$ if and only if $\mathcal{S} = \mathbf{R}\mathbf{I}_p\mathbf{R}^T + m(\mathbf{R}\mathbf{r}_p)^\times (\mathbf{R}\mathbf{r}_p)^\times \succ 0.$

We can further use the fact that for $\mathbf{v} \in \mathbb{R}^3,$
\begin{equation*}
    \mathbf{v}^{\times} \mathbf{v}^{\times} = \mathbf{v} \mathbf{v}^T - (\mathbf{v}^T \mathbf{v}) \mathbf{I},
\end{equation*}
to write 
\begin{align*}
    \mathcal{S} &= \mathbf{R}\left(\mathbf{I}_{cm} \pm m(\mathbf{r}_p^T \mathbf{r}_p - \mathbf{r}_p \mathbf{r}_p^T)\right)\mathbf{R}^T, \\
    &= \mathbf{R}\mathbf{I}_{cm}\mathbf{R}^T,
\end{align*}
so we have $\mathcal{S} \succ 0,$ since $\mathbf{I}_{cm} \succ 0$ for bodies of finite size. Thus, we conclude $\mathbf{H} \succ 0.$

\end{proof}


\begin{lemma}
    $\dot{\mathbf{H}} - 2 \mathbf{C}$ is skew symmetric.
\end{lemma}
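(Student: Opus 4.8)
\emph{Proof plan.} Since $\mathbf{H}$ is symmetric, $\dot{\mathbf{H}}$ is symmetric as well, so $\dot{\mathbf{H}} - 2\mathbf{C}$ is skew symmetric if and only if $(\dot{\mathbf{H}} - 2\mathbf{C})^T = -(\dot{\mathbf{H}} - 2\mathbf{C})$, which rearranges to the single matrix identity $\dot{\mathbf{H}} = \mathbf{C} + \mathbf{C}^T$. The plan is therefore to verify this identity block by block; once it holds, skew symmetry is immediate since $\dot{\mathbf{H}} - 2\mathbf{C} = \mathbf{C}^T - \mathbf{C}$.

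First I would compute $\dot{\mathbf{H}}$. Writing $\mathbf{p} := \mathbf{R}\mathbf{r}_p$ and using $\dot{\mathbf{R}} = \bm{\omega}^\times \mathbf{R}$ (equivalently $\dot{\mathbf{R}}^T = -\mathbf{R}^T \bm{\omega}^\times$, since $(\bm{\omega}^\times)^T = -\bm{\omega}^\times$), one gets $\dot{\mathbf{p}} = \bm{\omega}^\times \mathbf{p}$, so that $\frac{d}{dt}(\mathbf{p}^\times) = (\bm{\omega}^\times\mathbf{p})^\times$, and $\frac{d}{dt}(\mathbf{R}\mathbf{J}_p\mathbf{R}^T) = \bm{\omega}^\times \mathbf{R}\mathbf{J}_p\mathbf{R}^T - \mathbf{R}\mathbf{J}_p\mathbf{R}^T\bm{\omega}^\times$; the $m\mathbf{I}$ block is constant. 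Hence the only nonzero blocks of $\dot{\mathbf{H}}$ are the off-diagonal blocks $\pm\, m(\bm{\omega}^\times\mathbf{p})^\times$ and the bottom-right block $\bm{\omega}^\times \mathbf{R}\mathbf{J}_p\mathbf{R}^T - \mathbf{R}\mathbf{J}_p\mathbf{R}^T\bm{\omega}^\times$.

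Next I would compute $\mathbf{C} + \mathbf{C}^T$ block by block, treating $\mathbf{I}_p$ in the definition of $\mathbf{C}$ as the same inertia tensor about $P$ as $\mathbf{J}_p$ in $\mathbf{H}$. The top-left block is $\bm{0}$. In the bottom-right block the two Coriolis terms $\mp\, m((\mathbf{R}\mathbf{r}_p)^\times\dot{\mathbf{x}})^\times$ cancel against each other (a hat-map matrix is its own negative transpose), leaving $\bm{\omega}^\times \mathbf{R}\mathbf{J}_p\mathbf{R}^T - \mathbf{R}\mathbf{J}_p\mathbf{R}^T\bm{\omega}^\times$, which matches $\dot{\mathbf{H}}$. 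For the off-diagonal blocks, $\mathbf{C} + \mathbf{C}^T$ produces $\pm\, m(\bm{\omega}^\times\mathbf{p}^\times - \mathbf{p}^\times\bm{\omega}^\times)$, and the key step is the Lie-bracket identity on $\mathfrak{so}(3)$, $[\bm{\omega}^\times,\mathbf{p}^\times] = (\bm{\omega}\times\mathbf{p})^\times = (\bm{\omega}^\times\mathbf{p})^\times$, which turns these into $\pm\, m(\bm{\omega}^\times\mathbf{p})^\times$, again matching $\dot{\mathbf{H}}$. Collecting the blocks yields $\mathbf{C} + \mathbf{C}^T = \dot{\mathbf{H}}$, hence $\dot{\mathbf{H}} - 2\mathbf{C} = \mathbf{C}^T - \mathbf{C}$ is skew symmetric.

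The computation is almost entirely mechanical; the only non-bookkeeping ingredient is recognizing and applying the hat-map commutator identity $[\mathbf{a}^\times,\mathbf{b}^\times] = (\mathbf{a}\times\mathbf{b})^\times$ in the off-diagonal blocks, while keeping the transposition rules $(\mathbf{a}^\times)^T = -\mathbf{a}^\times$ and $(\mathbf{A}\mathbf{B})^T = \mathbf{B}^T\mathbf{A}^T$ straight. I expect the main pitfall to be sign errors in these transpositions rather than anything conceptual, and I would also note the harmless notational point that $\mathbf{J}_p$ (used in $\mathbf{H}$) and $\mathbf{I}_p$ (used in $\mathbf{C}$) denote the same inertia about $P$.
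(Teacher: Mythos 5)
Your proof is correct and takes essentially the same route as the paper's: a direct block-by-block computation of $\dot{\mathbf{H}}$ using $\dot{\mathbf{R}} = \bm{\omega}^\times\mathbf{R}$, followed by cancellation against the blocks of $\mathbf{C}$. Recasting the claim as the identity $\dot{\mathbf{H}} = \mathbf{C} + \mathbf{C}^T$ rather than inspecting the blocks of $\dot{\mathbf{H}} - 2\mathbf{C}$ is only a cosmetic difference (the paper's off-diagonal term $\mathbf{A} - \mathbf{A}^T$ with $\mathbf{A} = m\bm{\omega}^\times(\mathbf{R}\mathbf{r}_p)^\times$ is exactly your commutator $m[\bm{\omega}^\times,\mathbf{p}^\times] = m(\bm{\omega}\times\mathbf{p})^\times$), and your reading of $\mathbf{I}_p$ and $\mathbf{J}_p$ as the same inertia about $P$ is the intended one.
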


\begin{proof}

$\dot{\mathbf{H}}$ is given by
\begin{equation*}
    \dot{\mathbf{H}} = \left[\begin{array}{cc} \bm{0}_3 & \mathbf{A}-\mathbf{A}^T  \\ \mathbf{A}^T-\mathbf{A}  & \bm{\omega}^\times \mathbf{R}\mathbf{I}_p\mathbf{R}^T - \mathbf{R}\mathbf{I}_p\mathbf{R} \bm{\omega}^\times \end{array}\right],
\end{equation*}
where $\mathbf{A} = m \bm{\omega}^\times(\mathbf{R}\mathbf{r}_p)^\times$, and using the fact $\dot{\mathbf{R}} = \bm{\omega}^\times \mathbf{R}$. Further,
\begin{equation*}
    \dot{\mathbf{H}}-2\mathbf{C} = \left[\begin{array}{cc} \bm{0}_3 & 
    -\mathbf{B} \\ \mathbf{B} & \mathbf{D}
    \end{array}\right],
\end{equation*}
where $\mathbf{B} = \mathbf{B}^T =  \mathbf{A} + \mathbf{A}^T$ and $\mathbf{D} = -\bm{\omega}^\times \mathbf{R}\mathbf{I}_p\mathbf{R}^T - \mathbf{R}\mathbf{I}_p\mathbf{R} \bm{\omega}^\times - m((\mathbf{R}\mathbf{r}_p)^\times \dot{\mathbf{x}})^\times$. We can see $\dot{\mathbf{H}} - 2\mathbf{C}$ is skew-symmetric, since $\mathbf{D}$ is skew-symmetric, and the off-diagonal blocks are symmetric and of opposite sign.
\end{proof}

\begin{IEEEbiography}[{\includegraphics[width=1in,height=1.25in,clip, keepaspectratio]{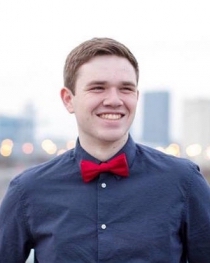}}]{Preston Culbertson} received a BS from the Georgia Institute of Technology in 2016, and an MS in 2020 from Stanford University, where he is currently a PhD candidate Mechanical Engineering. His research interests include multi-robot manipulation and grasping, distributed optimization and control, and learning and adaptation in robotic systems, especially among groups of interacting agents. 
\end{IEEEbiography}

\begin{IEEEbiography}
    [{\includegraphics[width=1in,height=1.25in,clip,keepaspectratio]{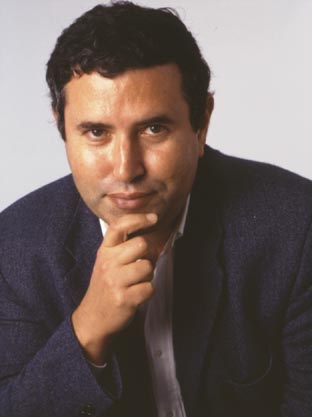}}]  {Jean-Jacques Slotine} is Professor of Mechanical Engineering and Information Sciences, Professor of Brain and Cognitive Sciences, and Director of the Nonlinear Systems Laboratory at~MIT. He received his Ph.D. from~MIT in 1983. After working at Bell Labs in the computer research department, he joined the MIT faculty in 1984. His  research focuses on developing rigorous but practical tools for nonlinear systems analysis and control. These have included key advances and experimental demonstrations in the contexts of sliding control, adaptive nonlinear control, adaptive robotics, machine learning, and contraction analysis of nonlinear dynamical systems. Professor Slotine is the co-author of two graduate textbooks, ``Robot Analysis and Control'' (Asada and Slotine, Wiley, 1986), and ``Applied Nonlinear Control'' (Slotine and Li, Prentice-Hall, 1991). He was a member of the French National Science Council from 1997 to 2002 and of Singapore’s A*STAR SigN Advisory Board from 2007 to 2010. He currently is a member of the Scientific Advisory Board of the Italian Institute of Technology and a Distinguished Visiting Faculty at Google AI. He is the recipient of the 2016 Oldenburger Award.
\end{IEEEbiography}

\begin{IEEEbiography}[{\includegraphics[width=1in,height=1.25in,clip]{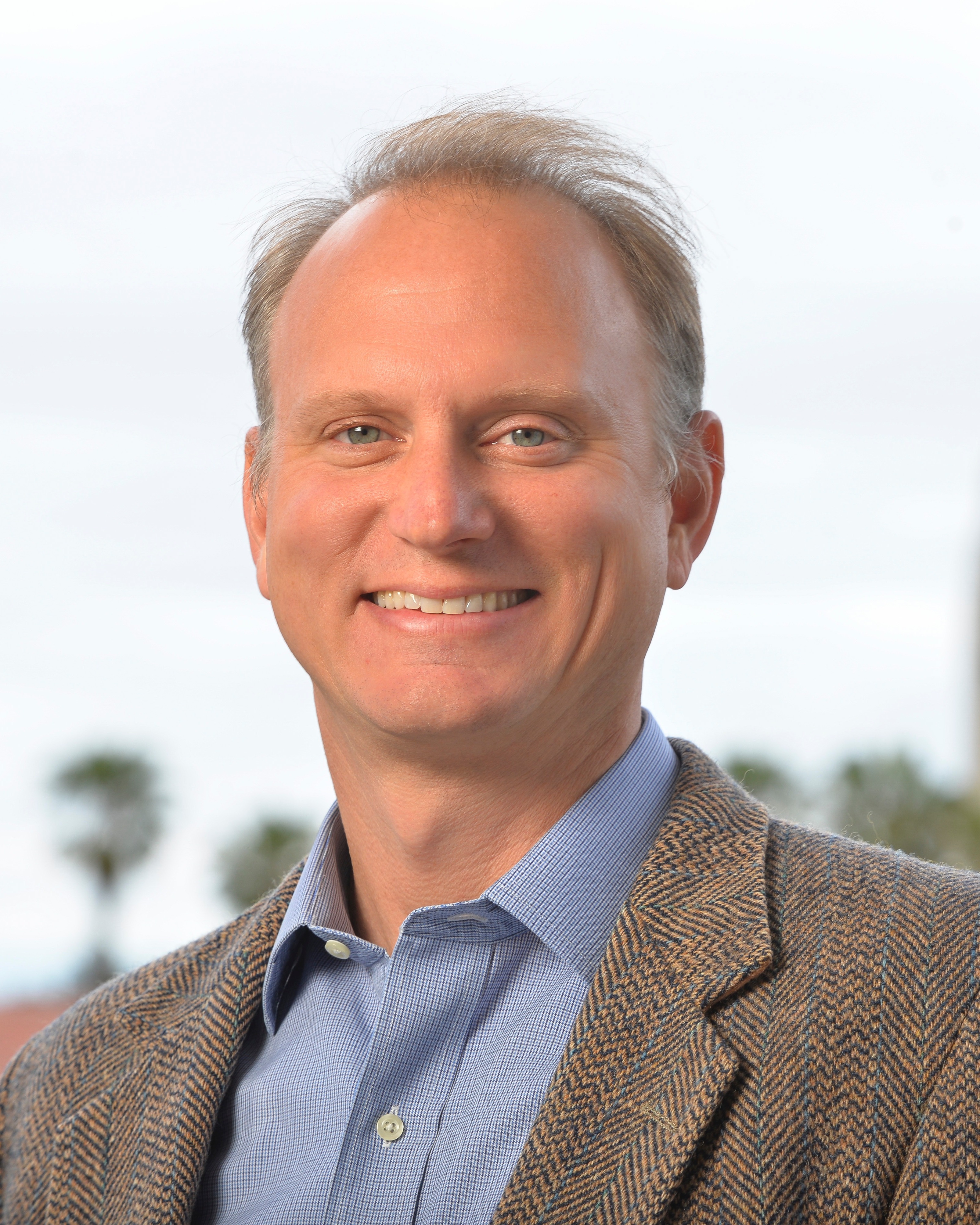}}]
{Mac Schwager} is an assistant professor with the Aeronautics and Astronautics Department at Stanford University.  He obtained his BS degree in 2000 from Stanford University, his MS degree from MIT in 2005, and his PhD degree from MIT in 2009.  He was a postdoctoral researcher working jointly in the GRASP lab at the University of Pennsylvania and CSAIL at MIT from 2010 to 2012, and was an assistant professor at Boston University from 2012 to 2015.  He received the NSF CAREER award in 2014, the DARPA YFA in 2018, and a Google faculty research award in 2018, and the IROS Toshio Fukuda Young Professional Award in 2019.  His research interests are in distributed algorithms for control, perception, and learning in groups of robots, and models of cooperation and competition in groups of engineered and natural agents.
\end{IEEEbiography}

\end{document}